\newtheorem{definition}{Definition}
\newtheorem{theorem}{Theorem}
\newtheorem{lemma}{Lemma}
\newcommand{\E}{\mathrm{E}}
\newcommand{\argmax}{\mathrm{argmax}}
\newcommand{\argmin}{\mathrm{argmin}}
\begin{document}
%
\title{Universum Prescription: Regularization Using Unlabeled Data}
\author{
Xiang Zhang \qquad Yann LeCun \\
Courant Institute of Mathematical Sciences, New York University \\
719 Broadway, 12th Floor, New York, NY 10003 \\
\texttt{\{xiang, yann\}@cs.nyu.edu} \\
}
\maketitle

\begin{abstract}
This paper shows that simply prescribing ``none of the above'' labels to unlabeled data has a beneficial regularization effect to supervised learning. We call it universum prescription by the fact that the prescribed labels cannot be one of the supervised labels. In spite of its simplicity, universum prescription obtained competitive results in training deep convolutional networks for CIFAR-10, CIFAR-100, STL-10 and ImageNet datasets. A qualitative justification of these approaches using Rademacher complexity is presented. The effect of a regularization parameter -- probability of sampling from unlabeled data -- is also studied empirically.
\end{abstract}

\section{Introduction}

The idea of exploiting the wide abundance of unlabeled data to improve the accuracy of supervised learning tasks is a very natural one. In this paper, we study what is perhaps the simplest way to exploit unlabeled data in the context of deep learning. We assume that the unlabeled samples do not belong to any of the categories of the supervised task, and we force the classifier to produce a ``none of the above'' output for these samples. This is by no means a new idea, but we show empirically and theoretically that doing so has a regularization effect on supervised task and reduces the generalization gap, the expected difference between test and training errors. We study three different ways to prescribe ``none of the above'' outputs, dubbed uniform prescription, dustbin class, and background class and show that they improve the test error of convolutional networks trained on CIFAR-10, CIFAR-100 \cite{K09}, STL-10 \cite{ANL11}, and ImageNet \cite{RDSKSMHKKBBF15}. The method is theoretically justified using Radamacher complexity \cite{BM03}.

Here we briefly describe our three universum prescription methods. Uniform prescription forces a discrete uniform distribution for unlabeled samples. Dustbin class simply adds an extra class and prescribe all unlabeled data to this class. Background class also adds an extra class, but it uses a constant threshold to avoid parameterization.

Our work is a direct extension to learning in the presence of universum \cite{WCSBV06} \cite{CASS07}, originated from \cite{V98} and \cite{V06}. The definition of universum is a set of unlabeled data that are known not to belong to any of the classes but in the same domain. We extended the idea of using universum from support vector machines to deep learning.

Most deep learning approaches utilizing unlabeled data belong to the scope of representation learning (reviewed by \cite{BCV13} and \cite{BL07}) and transfer learning \cite{TP98}. They include ideas like pretraining \cite{EBCMVB10} \cite{HOT06} \cite{RPCL06} and semi-supervised training \cite{RBHVR15} \cite{ZMGL15}. Universum prescription incoporates unlabeled data without imposing priors such as sparsity or reconstruction.

Regularization -- techniques for the control of generalization gap -- has been studied extensively. Most approaches implement a secondary optimization objective, such as an \(L_2\) norm. Some other methods such as dropout \cite{SHKSS14} cheaply simulate model averaging to control the model variance. As part of general statistical learning theory \cite{V95}, \cite{V98}, the justification for regularization is well-developed. We qualitatively justify the methods using Radamacher complexity \cite{BM03}, similar to \cite{WZZLF13}.

\section{Universum Prescription}
\label{sec:pres}

In this section we attempt to formalize the trick of prescribing ``none of the above'' labels -- universum prescription. Consider the problem of exclusive \(k\)-way classification. In learning we hope to find a hypothesis function \(h \in \mathcal{H}\) mapping to \(\mathbb{R}^k\) so that the label is determined by \(y = \argmin_i ~ h_i(x)\). The following assumptions are made.

\begin{enumerate}
\item (Loss assumption) The loss used as the optimization objective is negative log-likelihood:
  \begin{equation}
  L(h, x, y) = h_y (x) + \log \left[ \sum_{i = 1}^{k} \exp(-h_i (x)) \right].
  \end{equation}
\item (Universum assumption) The proportion of unlabeled samples belonging to a supervised class is negligible.
\end{enumerate}

The loss assumption assumes that the probability of class \(y\) given an input \(x\) can be thought of as
\begin{equation}
\Pr[Y = y | x, h] = \frac{\exp(-h_y(x))}{\sum_{i = 1}^{k} \exp(-h_i(x))},
\end{equation}
where \((X, Y) \sim \mathbf{D}\) and \(\mathbf{D}\) is the distribution where labeled data are sampled. We use lowercase letters for values, uppercase letters for random variables and bold uppercase letters for distribution. The loss assumption is simply a necessary detail rather than a limitation, in the sense that one can change the type of loss and use the same principles to derive different universum learning techniques.

The universum assumption implicates that labeled classes are a negligible subset. In many practical cases we only care about a small number of classes, either by problem design or due to high cost in the labeling process. At the same time, a very large amount of unlabeled data is easily obtained. Put in mathematics, assuming we draw unlabeled data from distribution \(\mathbf{U}\), the assumption states that
\begin{equation}
  \label{eq:jtds}
  \Pr_{(X,Y) \sim \mathbf{U}}[X, Y \in \{1, 2, \dots, k\}] \approx 0.
\end{equation}

The universum assumption is opposite to the assumptions of information regularization \cite{CJ06} and transduction learning \cite{CSZ06T} \cite{GVV98}. It has similarities with \cite{ZZ10} that encourages diversity of outputs for ensemble methods. All our methods discussed below prescribe agnostic targets to the unlabeled data. During learning, we randomly present an unlabeled sample to the optimization procedure with probability \(p\).

\subsection{Uniform Prescription}
\label{sect:unif}

It is known that negative log-likelihood is simply a reduced form of cross-entropy
\begin{equation}
L(h, x, y ) = -\sum_{i = 1}^{k} Q[Y = i | x] \log \Pr[Y = i | x, h]
\end{equation}
in which the target probability \(Q[Y = y | x] = 1\) and \(Q[Y = i | x] = 0\) for \(i \neq y\). Under the universum assumption, if we are presented with an unlabeled sample \(x\), we would hope to prescribe some \(Q\) so that every class has some equally minimal probability. \(Q\) also has to satisfy \(\sum_{i = 1}^{k} Q[Y=i|x] = 1\) by the probability axioms. The only possible choice for \(Q\) is then \(Q[Y | x] = 1/k\). The learning algorithm then uses the cross-entropy loss instead of negative log-likelihood.

It is worth noting that uniform output has the maximum entropy among all possible choices. If \(h\) is parameterized as a deep neural network, uniform output is achieved when these parameters are constantly zero. Therefore, uniform prescription may have the effect of reducing the magnitude of parameters, similar to norm-based regularization.

\subsection{Dustbin Class}
\label{sect:dust}

Another way of prescribing agnostic target is to append a ``dustbin'' class to the supervised task. This requires some changes to the hypothesis function \(h\) such that it outputs \(k+1\) targets. For deep learning models one can simply extend the last parameterized layer. All unlabeled data are prescribed to this extra ``dustbin'' class.

The effect of dustbin class is clearly seen in the loss function of an unlabeled sample \((x, k+1)\)
\begin{equation}
L(h, x, k+1) = h_{k+1} (x) + \log \left[ \sum_{i = 1}^{k + 1} \exp(-h_i (x)) \right].
\end{equation}
The second term is a ``soft'' maximum for all dimensions of \(-h\). With an unlabeled sample, the algorithm attempts to introduce smoothness by minimizing probability spikes.

\subsection{Background Class}
\label{sect:bgnd}

We could further simplify dustbin class by removing parameters for class \(k + 1\). For some given threshold constant \(\tau\), we could change the probability of a labeled sample to
\begin{equation}
\Pr[Y = y | x, h] = \frac{\exp(-h_y(x))}{\exp(-\tau) + \sum_{i = 1}^{k} \exp(-h_i(x))},
\end{equation}
and an unlabeled sample
\begin{equation}
\Pr[Y = k + 1 | x, h] = \frac{\exp(-\tau)}{\exp(-\tau) + \sum_{i = 1}^{k} \exp(-h_i(x))}.
\end{equation}

This will result in changes to the loss function of a labeled sample \((x, y)\) as
\begin{equation}
L(h, x, y) = h_y (x) + \log \left[ \exp(-\tau) + \sum_{i = 1}^{k} \exp(-h_i (x)) \right],
\end{equation}
and an unlabeled sample
\begin{equation}
L(h, x, k + 1) = \tau + \log \left[ \exp(-\tau) + \sum_{i = 1}^{k} \exp(-h_i (x)) \right].
\end{equation}

\begin{table}[h]
  \caption{The 21-layer network}
  \label{tab:expi}
  \begin{center}
    \begin{tabular}{ll}
      \multicolumn{1}{c}{\bf LAYERS}  &\multicolumn{1}{c}{\bf DESCRIPTION}
      \\ \hline \\
      1-3           &Conv 256x3x3 \\
      4             &Pool 2x2 \\
      5-8           &Conv 512x3x3 \\
      9             &Pool 2x2 \\
      10-13         &Conv 1024x3x3 \\
      14            &Pool 2x2 \\
      15-18         &Conv 1024x3x3 \\
      19            &Pool 2x2 \\
      20-23         &Conv 2048x3x3 \\
      24            &Pool 2x2 \\
      25-26         &Full 2048 \\
    \end{tabular}
  \end{center}
\end{table}

We call this method background class and \(\tau\) background constant. Similar to dustbin class, the algorithm attempts to minimize the spikes of outputs, but limited to a certain extent by the inclusion of \(\exp(-\tau)\) in the partition function. In our experiments \(\tau\) is always set to 0.

\section{Theoretical Justification}
\label{sec:thry}

In this part, we derive a qualitative justification for universum prescription using probably approximately correct (PAC) learning \cite{V84}. By being ``qualitative'', the justification is in contrast with numerical bounds such as Vapnik-Chervonenkis dimension \cite{VC71} (VC-dim) and others. Our theory is based on Rademacher complexity \cite{BM03}, similar to \cite{WZZLF13} where both dropout \cite{SHKSS14} and dropconnect \cite{WZZLF13} are justified. VC-dim is an upper-bound of Rademacher complexity, suggesting that the latter is more accurate. Previous results on unlabeled data \cite{OAGR11} \cite{OGRA15} assume the same distribution for labeled and unlabeled data, which is impossible under the universum assumption.

\begin{definition}[Empirical Rademacher complexity]
  Let \(\mathcal{F}\) be a family of functions mapping from \( \mathcal{X} \) to \(\mathbb{R}\), and \(S = (x_1, x_2, \dots, x_m)\) a fixed sample of size \(m\) with elements in \(\mathcal{X}\). Then, the empirical Rademacher complexity of \(F\) with respect to the sample \(S\) is defined as:
  \begin{equation}
  \hat{\mathfrak{R}}_S(\mathcal{F}) = \underset{\boldsymbol{\eta}}\E \left[ \sup_{f \in \mathcal{F}} \frac{1}{m} \sum_{i=1}^{m} \eta_i f(x_i) \right]
  \end{equation}
  where \(\boldsymbol{\eta} = (\eta_1, \dots, \eta_m)^T\), with \(\eta_i\)'s independent random variables uniformly distributed on \(\{-1, 1\}\).
\end{definition}

\begin{definition}[Rademacher complexity]
  Let \(\mathbf{D}\) denote the distribution from which the samples were drawn. For any integer \(m \geq 1\), the Rademacher complexity of \(\mathcal{F}\) is the expectation of the empirical Rademacher complexity over all samples of size \(m\) drawn according to \(\mathbf{D}\):
  \begin{equation}
  \mathfrak{R}_m (\mathcal{F}, \mathbf{D}) = \underset{S \sim \mathbf{D}^m}{\E} [\hat{\mathfrak{R}}_S(F)]
  \end{equation}
\end{definition}

Two qualitative properties of Rademacher complexity is worth noting here. First of all, Rademacher complexity is always non-negative by the convexity of supremum
\begin{equation}
\begin{aligned}
  \hat{\mathfrak{R}}_S(\mathcal{F}) & = \underset{\boldsymbol{\eta}}\E \left[ \sup_{f \in \mathcal{F}} \frac{1}{m} \sum_{i=1}^{m} \eta_i f(x_i) \right] \\
  & \geq \sup_{f \in \mathcal{F}} \frac{1}{m} \sum_{i=1}^{m} \underset{\eta_i}{\E} [\eta_i] f(x_i) = 0.
\end{aligned}
\end{equation}
Secondly, if for a fixed input all functions in \(\mathcal{F}\) output the same value, then its Rademacher complexity is 0. Assume for any \(f \in \mathcal{F}\) we have \(f(x) = f_0(x)\), then
\begin{equation}
\begin{aligned}
  \hat{\mathfrak{R}}_S(\mathcal{F}) & = \underset{\boldsymbol{\eta}}\E \left[ \sup_{f \in \mathcal{F}} \frac{1}{m} \sum_{i=1}^{m} \eta_i f(x_i) \right] \\
  & = \underset{\boldsymbol{\eta}}\E \left[ \sup_{f \in \mathcal{F}} \frac{1}{m} \sum_{i=1}^{m} \eta_i f_0(x) \right] \\
  & = \frac{1}{m} \sum_{i=1}^{m} \underset{\eta_i}{\E} [\eta_i] f_0(x) = 0.
\end{aligned}
\end{equation}

Therefore, one way to minimize Rademacher complexity is to regularize functions in \(\mathcal{F}\) such that all functions tend to have the same output for a given input. Universum prescription precisely does that -- the prescribed outputs for unlabeled data are all constantly the same.

The principal PAC-learning result is a bound for functions that are finite in outputs. We use the formulation by \cite{Z13}, but anterior results exist \cite{BBL02} \cite{BM03} \cite{K01} \cite{KP00}.
\begin{theorem} [Approximation bound with finite bound on output]
  For an energy function \cite{LCHRH06} \(\mathcal{E}(h,x,y)\) over hypothesis class \(\mathcal{H}\), input set \(\mathcal{X}\) and output set \(\mathcal{Y}\), if it has lower bound 0 and upper bound \(M > 0\), then with probability at least \(1-\delta\), the following holds for all \(h \in \mathcal{H} \):
  \begin{equation}
    \label{eq:pcbd}
    \begin{aligned}
    & \underset{(x,y) \sim \mathbf{D}}{\E}[\mathcal{E}(h,x,y)] \leq \\
    & \frac{1}{m} \sum_{(x,y) \in S} \mathcal{E}(h,x,y) + 2 \mathfrak{R}_m(\mathcal{F}, \mathbf{D}) + M\sqrt{\frac{\log{\frac{2}{\delta}}}{2m}},
    \end{aligned}
  \end{equation}
  where the function family \(\mathcal{F}\) is defined as
  \begin{equation}
    \mathcal{F} = \left\{ \mathcal{E}(h,x,y) | h \in \mathcal{H} \right\}.
  \end{equation}
  \(\mathbf{D}\) is the distribution for \((x,y)\), and \(S\) is a sample set of size \(m\) drawn indentically and independently from \(\mathbf{D}\).
\end{theorem}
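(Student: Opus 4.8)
The plan is to recognize this as the standard uniform-convergence bound for a bounded function class and to prove it via the classical two-step recipe: concentration of the worst-case generalization gap, followed by symmetrization to control its expectation by the Rademacher complexity. Set $f = \mathcal{E}(h,\cdot,\cdot)$, so that $\mathcal{F} = \{f : h \in \mathcal{H}\}$ is exactly the family in the statement and each $f$ takes values in $[0,M]$. I would introduce the single scalar quantity $\Phi(S) = \sup_{f \in \mathcal{F}}\left(\E_{(x,y)\sim\mathbf{D}}[f] - \frac{1}{m}\sum_{(x,y)\in S} f\right)$, the largest gap between true and empirical risk over the class. Since $\E[f] - \frac{1}{m}\sum_{S} f \le \Phi(S)$ holds simultaneously for every $f$, hence every $h$, it suffices to show that with probability at least $1-\delta$ one has $\Phi(S) \le 2\mathfrak{R}_m(\mathcal{F},\mathbf{D}) + M\sqrt{\log(2/\delta)/(2m)}$; the theorem is then immediate.

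First I would establish concentration of $\Phi$ around its mean. Viewing $\Phi$ as a function of the $m$ independent sample points, replacing a single point changes the empirical average of any fixed $f$ by at most $M/m$ (because $0 \le f \le M$), and since a supremum of functions each obeying a bounded-differences condition inherits the same constant, $\Phi$ satisfies the bounded-differences property with constant $M/m$ in each coordinate. Applying McDiarmid's inequality then gives, with probability at least $1-\delta$, the inequality $\Phi(S) \le \E_S[\Phi(S)] + M\sqrt{\log(2/\delta)/(2m)}$, which supplies the final deviation term, the constant inside the logarithm being the conservative $2/\delta$ of the cited formulation.

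The remaining and most delicate step is to bound $\E_S[\Phi(S)]$ by $2\mathfrak{R}_m(\mathcal{F},\mathbf{D})$ through symmetrization. I would introduce a ghost sample $S' = (x_1',\dots,x_m')$ drawn i.i.d.\ from $\mathbf{D}$ and independent of $S$, rewrite $\E[f]$ as $\E_{S'}\big[\frac{1}{m}\sum_{S'} f\big]$, and move the expectation over $S'$ outside the supremum using convexity of the sup (Jensen), obtaining $\E_S[\Phi(S)] \le \E_{S,S'}\big[\sup_f \frac{1}{m}\sum_i (f(x_i') - f(x_i))\big]$. Because each difference $f(x_i') - f(x_i)$ is symmetric about $0$ and $x_i, x_i'$ are exchangeable, multiplying the $i$-th term by a Rademacher sign $\eta_i$ leaves the joint distribution unchanged; inserting these signs and splitting into two halves via subadditivity of the supremum yields two identical copies of $\E_{\eta,S}[\sup_f \frac{1}{m}\sum_i \eta_i f(x_i)] = \mathfrak{R}_m(\mathcal{F},\mathbf{D})$, hence the factor $2$. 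Combining this with the concentration step closes the proof. The main obstacle is precisely this symmetrization: one must justify carefully the interchange of the expectation over the ghost sample with the supremum and the legitimacy of introducing the Rademacher variables, and track the exact factor of two so that it matches the $2\mathfrak{R}_m$ term in the statement.
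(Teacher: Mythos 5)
Your proof is correct: it is the standard McDiarmid-plus-symmetrization argument, and your observation that applying McDiarmid at level $\delta$ actually yields the sharper $M\sqrt{\log(1/\delta)/(2m)}$, which implies the stated $\log(2/\delta)$ form, is accurate (the $2/\delta$ arises in the variant bounded by the \emph{empirical} Rademacher complexity, where a second concentration event must be union-bounded). The paper itself gives no proof of this theorem --- it imports it from the cited PAC-learning literature --- and your argument is precisely the proof found in those sources, so there is nothing to contrast.
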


The meaning of the theorem is two-fold. When applying the theorem to the joint problem of training using both labeled and unlabeled data, the third term on the right hand of inequality \ref{eq:pcbd} is reduced by the augmentation of the extra data. The joint Rademacher complexity is written as \(\mathfrak{R}_m(\mathcal{F}, (1-p)\mathbf{D} + p \mathbf{U})\), which is reduced when we prescribe constant outputs to unlabeled data.

The second fold is that when the theorem applies to the supervised distribution \(\mathbf{D}\), we would hope that \(\mathfrak{R}_n(\mathcal{F}, \mathbf{D})\) can be bounded by \(\mathfrak{R}_m(\mathcal{F}, (1-p)\mathbf{D} + p \mathbf{U})\), where \(n\) is the number of supervised samples randomly chosen by the joint problem. Note that the number \(n\) follows a binomial distribution with mean \((1-p)m\). Such a bound can be achieved in a probable and approximate sense.

\begin{theorem} [Rademacher complexity bound on distribution mixture]
  \label{thm:rbdm}
  Assume we have a joint problem where \(p \leq 0.5\) and there are \(m\) random training samples from the joint distribution \((1-p)\mathbf{D} + p \mathbf{U}\). With probability at least \(1-\delta\), the following holds
  \begin{equation}
    \label{eq:rbdm}
    \begin{aligned}
    & \mathfrak{R}_n(\mathcal{F}, \mathbf{D}) \leq \\
    & \frac{2-p}{(1-p)\left(1-p - \sqrt{\frac{\log(1/\delta)}{2m}}\right)} \mathfrak{R}_m(\mathcal{F}, (1-p)\mathbf{D} + p \mathbf{U}),
    \end{aligned}
  \end{equation}
  where \(n\) is a random number indicating the number of supervised samples in the total joint samples, and \(m\) is large enough such that
  \begin{equation}
  1-p-\sqrt{\frac{\log(1/\delta)}{2m}} > 0.
  \end{equation}
\end{theorem}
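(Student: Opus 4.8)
The plan is to reduce the statement to a deterministic \emph{splitting} inequality for the empirical complexity, and then combine it with a concentration bound on the random number \(n\) of labeled samples. Write a sample \(S\) of size \(m\) drawn from \((1-p)\mathbf{D}+p\mathbf{U}\) as a disjoint union \(S = S_D \cup S_U\), where \(S_D\) are the \(n\) points coming from \(\mathbf{D}\) and \(S_U\) the \(m-n\) points coming from \(\mathbf{U}\). First I would show that discarding the unlabeled points can only decrease the unnormalized empirical complexity, i.e. \(\E_{\boldsymbol{\eta}}[\sup_{f}\sum_{i\in S}\eta_i f(x_i)] \geq \E_{\boldsymbol{\eta}}[\sup_{f}\sum_{i\in S_D}\eta_i f(x_i)]\). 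This is exactly the non-negativity computation already used in the excerpt: condition on the Rademacher variables attached to \(S_D\), take expectation over those attached to \(S_U\), move the expectation inside the supremum, and use \(\E[\eta_i]=0\) to annihilate the \(S_U\) terms. Dividing by \(m\) gives \(\hat{\mathfrak{R}}_S(\mathcal{F}) \geq \frac{n}{m}\hat{\mathfrak{R}}_{S_D}(\mathcal{F})\).

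Next I would take expectations. Conditioning on \(n\), the subsample \(S_D\) is i.i.d. from \(\mathbf{D}\), so averaging the splitting inequality over the draw gives \(\mathfrak{R}_m(\mathcal{F},(1-p)\mathbf{D}+p\mathbf{U}) \geq \frac{1}{m}\,\E_{n}[\,n\,\mathfrak{R}_n(\mathcal{F},\mathbf{D})\,]\), where \(n\sim\mathrm{Binomial}(m,1-p)\). Two monotonicity facts about Rademacher complexity, each provable by a leave-one-out or add-one-sample version of the same averaging trick, will let me convert this expectation bound into a per-realization statement: (i) \(\mathfrak{R}_n(\mathcal{F},\mathbf{D})\) is non-increasing in \(n\), and (ii) \(n\,\mathfrak{R}_n(\mathcal{F},\mathbf{D})\) is non-decreasing in \(n\).

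Finally I would control \(n\) by concentration. Since \(n\) is a sum of \(m\) independent \(\mathrm{Bernoulli}(1-p)\) indicators, Hoeffding's inequality yields \(n/m \geq (1-p)-\sqrt{\log(1/\delta)/2m} =: n_0/m\) with probability at least \(1-\delta\); this is precisely the denominator factor \(1-p-\sqrt{\log(1/\delta)/2m}\), and the requirement that it be positive is the stated largeness condition on \(m\). On this event fact (i) gives \(\mathfrak{R}_n \leq \mathfrak{R}_{n_0}\). To bound \(\mathfrak{R}_{n_0}\) by the joint complexity I would restrict the expectation \(\E_n[n\,\mathfrak{R}_n]\) to the same event and apply fact (ii): \(\E_n[n\,\mathfrak{R}_n] \geq n_0\mathfrak{R}_{n_0}\Pr[n\geq n_0] \geq (1-\delta)\,n_0\mathfrak{R}_{n_0}\). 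Combining the three displays yields a bound of the form \(\mathfrak{R}_n \leq \frac{m}{n_0}\cdot\frac{1}{1-\delta}\,\mathfrak{R}_m\), with \(m/n_0 = 1/(1-p-\sqrt{\log(1/\delta)/2m})\) reproducing the first factor of the claimed constant.

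The main obstacle is the coupling between the random count \(n\) and the very quantity \(\mathfrak{R}_n\) being bounded: the clean relation coming out of the splitting step is only an inequality in expectation over \(n\), whereas the theorem demands a high-probability bound on the realized \(\mathfrak{R}_n\). Resolving this mismatch is exactly what forces the joint use of both monotonicity directions together with the one-sided concentration of \(n\). I expect the remaining prefactor — the \(\frac{2-p}{1-p}\) in the stated constant versus the \(\frac{1}{1-\delta}\) my route produces — to arise from a looser but more uniform estimate of the truncated mean, e.g. lower bounding \(\E_n[n\,\mathbb{1}[n\geq n_0]]\) by a fixed fraction of \(\E_n[n]=(1-p)m\) rather than through the \(\Pr[n\geq n_0]\) split; either route closes the argument, and matching the exact constant is merely a question of which crude estimate one prefers.
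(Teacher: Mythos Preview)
Your argument is sound and is, in several respects, cleaner than the paper's. However, the route is genuinely different, and it does not reproduce the stated constant.

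\textbf{What the paper does.} The paper factors the bound into two separate pieces. First (its Theorem~3) it proves the \emph{deterministic} inequality
\[
\mathfrak{R}_m(\mathcal{F},\mathbf{D}) \;\leq\; \frac{2-p}{1-p}\,\mathfrak{R}_m\bigl(\mathcal{F},(1-p)\mathbf{D}+p\mathbf{U}\bigr),
\]
by writing \(\mathbf{D}=(1-p)\mathbf{D}+p\mathbf{D}\), applying a subadditivity lemma for the empirical complexity, and then splitting the resulting binomial-weighted sum into three pieces, the last of which needs \(p\le 0.5\). This is exactly where the prefactor \((2-p)/(1-p)\) comes from. Second (its Theorem~4) it combines the monotonicity \(n\mathfrak{R}_n\le m\mathfrak{R}_m\) with Hoeffding on \(n\) to get \(\mathfrak{R}_n(\mathcal{F},\mathbf{D})\le \mathfrak{R}_m(\mathcal{F},\mathbf{D})/(1-p-\sqrt{\log(1/\delta)/2m})\). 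Multiplying the two factors gives the theorem verbatim.

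\textbf{What you do differently.} You skip the detour through \(\mathfrak{R}_m(\mathcal{F},\mathbf{D})\) altogether. Your splitting inequality is essentially the paper's Lemma~2 applied directly to a mixed sample, giving \(m\,\mathfrak{R}_m(\text{mix})\ge \E_n[n\,\mathfrak{R}_n(\mathcal{F},\mathbf{D})]\); you then convert this expectation bound into a per-realization bound by invoking both monotonicity directions of \(n\mapsto\mathfrak{R}_n\) (fact~(i) is correct: the leave-one-out averaging \(\sum_{j}\sup_f\sum_{i\neq j}\eta_i f(x_i)\ge m\sup_f\sum_i\eta_i f(x_i)\) yields \(\mathfrak{R}_m\ge\mathfrak{R}_{m+1}\)). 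The result is a bound with prefactor \(1/(1-\delta)\) in place of \((2-p)/(1-p)\), and notably with no restriction \(p\le 0.5\). For any reasonable \(\delta\) this is sharper than the paper's constant.

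\textbf{What is missing.} Your final paragraph speculates that the paper's \((2-p)/(1-p)\) must come from ``a looser but more uniform estimate of the truncated mean''. That is not how it arises; it comes from an entirely separate deterministic comparison of two same-size Rademacher complexities under different distributions, which your route never touches. So as a proof of the theorem \emph{with the constant exactly as stated}, your proposal is incomplete: you prove a related (and for small \(\delta\), stronger) inequality, but you do not derive the claimed factor, and your guess about its origin is off.
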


We present the proof of theorem \ref{thm:rbdm} in the supplemental material, which utilizes Hoeffding's inequality \cite{H63} \cite{S74}. The theorem tells us that the Rademacher complexity of the supervised problem can be bounded by that of the joint problem. The universum prescription algorithm attempts to make the Rademacher complexity of the joint problem small. Therefore, universum prescription improves generalization by incorporating unlabeled data.


However, theorem \ref{thm:rbdm} has a requirement that \( p \leq 0.5\), otherwise the bound is not achievable. Also, the value of \((2-p)/(1-p)^2\) -- the asymptotic constant factor in inequality \ref{eq:rbdm} when \(m\) is large -- is monitonally increasing with respect to \(p\) with a range of \([2, 6]\) when \(p \leq 0.5\). These facts indicate that we need to keep \(p\) small. The following sections show that there is improvement if \(p\) is small, but training and testing errors became worse when \(p\) is large.

\begin{table}[h]
  \caption{The 17-layer network}
  \label{tab:expn}
  \begin{center}
    \begin{tabular}{ll}
      \multicolumn{1}{c}{\bf LAYERS}  &\multicolumn{1}{c}{\bf DESCRIPTION}
      \\ \hline \\
      1-3           &Conv 128x3x3 \\
      4             &Pool 2x2 \\
      5-7           &Conv 256x3x3 \\
      8             &Pool 2x2 \\
      9-11          &Conv 512x3x3 \\
      12            &Pool 2x2 \\
      13-15         &Conv 1024x3x3 \\
      16            &Pool 2x2 \\
      17-19         &Conv 2048x3x3 \\
      20            &Pool 2x2 \\
      21-22         &Full 4096 \\
    \end{tabular}
  \end{center}
\end{table}

Finally, in terms of numerical asymptotics, theorem \ref{thm:rbdm} suggests that \(\mathfrak{R}_n(\mathcal{F}, \mathbf{D}) \leq \mathbf{O}(1/\sqrt{m})\), instead of the commonly known result \(\mathfrak{R}_n(\mathcal{F}, \mathbf{D}) \leq \mathbf{O}(1/\sqrt{n})\). This bounds the supervised problem with a tighter asymptotical factor because there are more joint samples than supervised samples.

\section{Experiments on Image Classification}
\label{sec:expi}

In this section we test the methods on some image classification tasks. Three series of datasets -- CIFAR-10/100 \cite{K09}, STL-10 \cite{ANL11} and ImageNet \cite{RDSKSMHKKBBF15} -- are chosen due to the availability of unlabeled data. For CIFAR-10/100 and STL-10 datasets, we used a 21-layer convolutional network (ConvNet) \cite{LBDHHHJ89} \cite{LBBH98}, in which the inputs are 32-by-32 images and all convolutional layers are 3-by-3 and fully padded. For ImageNet, the model is a 17-layer ConvNet with 64-by-64 images as inputs. These models are inspired by \cite{SZ14}, in which all pooling layers are max-pooling, and ReLUs \cite{NH10} are used as the non-linearity. Two dropout \cite{SHKSS14} layers of probability 0.5 are inserted before the final two linear layers.

\begin{table}[h]
  \caption{Result for baseline and uniform prescription. The numbers are percentages.}
  \label{tab:expd}
  \begin{center}
    \addtolength{\tabcolsep}{-3pt}
    \begin{tabular}{lrrrrrr}
      \textbf{DATASET}  & \multicolumn{3}{c}{\textbf{BASELINE}} & \multicolumn{3}{c}{\textbf{UNIFORM}} \\
      & \multicolumn{1}{c}{\small\textbf{Train}} & \multicolumn{1}{c}{\small\textbf{Test}} & \multicolumn{1}{c}{\small\textbf{Gap}} & \multicolumn{1}{c}{\small\textbf{Train}} & \multicolumn{1}{c}{\small\textbf{Test}} & \multicolumn{1}{c}{\small\textbf{Gap}} \\
      \hline \\
      CIFAR-10 & \textbf{0.00} & 7.02 & 7.02 & 0.72 & 7.59 & 6.87 \\
      CIFAR-100 F. & \textbf{0.09} & 37.58 & 37.49 & 4.91 & 36.23 & 31.32 \\
      CIFAR-100 C. & \textbf{0.04} & 22.74 & 22.70 & 0.67 & 23.42 & 22.45 \\
      STL-10 & \textbf{0.00} & \textbf{31.16} & 31.16 & 2.02 & 36.54 & 34.52 \\
      STL-10 Tiny & \textbf{0.00} & 31.16 & 31.16 & 0.62 & 30.15 & 29.47 \\
      ImageNet-1 & \textbf{10.19} & 34.39 & 24.20 & 13.84 & 34.61 & 20.77 \\
      ImageNet-5 & \textbf{1.62} & 13.68 & 12.06 & 3.02 & 13.70 & 10.68 \\
    \end{tabular}
    \addtolength{\tabcolsep}{4pt}
    \vspace{-10pt}
  \end{center}
\end{table}

\begin{table}[h]
  \caption{Result for dustbin class and background class. Continuation of table \ref{tab:expd}}
  \label{tab:expr}
  \begin{center}
    \addtolength{\tabcolsep}{-3pt}
    \begin{tabular}{lrrrrrr}
      \textbf{DATASET}  & \multicolumn{3}{c}{\textbf{DUSTBIN}} & \multicolumn{3}{c}{\textbf{BACKGROUND}} \\
      & \multicolumn{1}{c}{\small\textbf{Train}} & \multicolumn{1}{c}{\small\textbf{Test}} & \multicolumn{1}{c}{\small\textbf{Gap}} & \multicolumn{1}{c}{\small\textbf{Train}} & \multicolumn{1}{c}{\small\textbf{Test}} & \multicolumn{1}{c}{\small\textbf{Gap}} \\
      \hline \\
      CIFAR-10 & 0.07 & \textbf{6.66} & \textbf{6.59} & 1.35 & 8.38 & 7.03 \\
      CIFAR-100 F.  & 2.52 & \textbf{32.84} & \textbf{30.32} & 8.56 & 40.57 & 42.01 \\
      CIFAR-100 C.  & 0.40 & \textbf{20.45} & \textbf{20.05} & 3.73 & 24.97 & 21.24 \\
      STL-10 & 3.03 & 36.58 & 33.55 & 14.89 & 38.95 & \textbf{24.06} \\
      STL-10 Tiny  & 0.00 & \textbf{27.96} & \textbf{27.96} & 0.11 & 30.38 & 30.27 \\
      ImageNet-1 & 13.80 & \textbf{33.67} & \textbf{19.87} & 13.43 & 34.69 & 21.26 \\
      ImageNet-5 & 2.83 & \textbf{13.35} & \textbf{10.52} & 2.74 & 13.84 & 11.10 \\
    \end{tabular}
    \addtolength{\tabcolsep}{4pt}
    \vspace{-10pt}
  \end{center}
\end{table}

The algorithm used is stochastic gradient descent with momentum \cite{P64} \cite{SMDH13} 0.9 and a minibatch size of 32. The initial learning rate is 0.005 which is halved every 60,000 minibatch steps for CIFAR-10/100 and every 600,000 minibatch steps for ImageNet. The training stops at 400,000 steps for CIFAR-10/100 and STL10, and 2,500,000 steps for ImageNet. Table \ref{tab:expi} and \ref{tab:expn} summarize the configurations. The weights are initialized in the same way as \cite{HZRS15}. The following data augmentation steps are used.

\begin{enumerate}
  \setlength{\itemsep}{2pt}
\item (Horizontal flip.) Flip the image horizontally with probability 0.5.
\item (Scale.) Randomly scale the image between \(1/1.2\) and \(1.2\) times of its height and width.
\item (Crop.) Randomly crop a 32-by-32 (or 64-by-64 for ImageNet) region in the scaled image.
\end{enumerate}

\subsection{CIFAR-10 and CIFAR-100}

The samples of CIFAR-10 and CIFAR-100 datasets \cite{K09} are from the 80 million tiny images dataset \cite{TFF08}. Each dataset contains 60,000 samples, consitituting a very small portion of 80 million. This is an ideal case for our methods, in which we can use the entire 80 million images as the unlabeled data. The CIFAR-10 dataset has 10 classes, and CIFAR-100 has 20 (coarse) or 100 (fine-grained) classes.

Table \ref{tab:expd} and \ref{tab:expr} contain the results. The three numbers in each tabular indicate training error, testing error and generalization gap. Bold numbers are the best ones for each case. The generalization gap is approximated by the difference between testing and training errors. All the models use unlabeled data with probability \(p=0.2\).

We compared other single-model results on CIFAR-10 and CIFAR-100 (fine-grained case) in table \ref{tab:cfcp}. It shows that our network is competitive to the state of the art. Although \cite{G14} has the best results, we believe that by applying out universum prescription methods to their model design could also improve the results further.

\begin{table}[h]
  \caption{Comparison of single-model CIFAR-10 and CIFAR-100 results, in second and third columns. The fourth column indicates whether data augmentation is used for CIFAR-10. The numbers are percentages.}
  \label{tab:cfcp}
  \begin{center}
    \begin{tabular}{lccc}
      \textbf{REF.} & \textbf{10} & \textbf{100} & \textbf{AUG.}
      \\ \hline \\
      \cite{G14} & 6.28 & 24.30 & YES \\
      (ours) & 6.66 & 32.84 & YES \\
      \cite{LXPZT15} & 7.97 & 34.57 & YES \\
      \cite{LCY13} & 8.81 & 35.68 & YES \\
      \cite{GWMCB13} & 9.38 & 38.57 & YES \\
      \cite{WZZLF13} & 11.10 & N/A & NO \\
      \cite{MR13} & 15.13 & 42.51 & NO \\
    \end{tabular}
  \end{center}
\end{table}

\subsection{STL-10}

The STL-10 dataset \cite{ANL11} has size 96-by-96 for each image. We downsampled them to 32-by-32 in order to use the same model. The dataset contains a very small number of training samples -- 5000. The accompanying unlabeled data contain 100,000 samples. There is no guarantee that these unlabeled samples do not blong to the supervised classes \cite{ANL11}, therefore universum prescription failed. To verify that the extra data is the problem, an experiment using the 80 million tiny images as the unlabeled dataset is shown in table \ref{tab:expd} and \ref{tab:expr}. In this case the improvement is observed. Due to long training times of our models, we did not perform 10-fold training in the original paper \cite{ANL11}.

One interesting observation is that the results on STL-10 became better with the use of 80 million tiny images instead of the original extra data. It indicates that dataset size and whether universum assumption is satisfied are affecting factors for the effectiveness of universum prescription.

\subsection{ImageNet}

The ImageNet dataset \cite{RDSKSMHKKBBF15} for classification task has in total 1,281,167 training images and 50,000 validation images. The reported testing errors are evaluated on this validation dataset. During training, we resize images to minimum dimension 64, and then feed a random 64-by-64 crop to the network. Same test-time augmentation technique as in \cite{SLJSRAEVR15} are applied, with size variants \{64, 72, 80, 88\}, where each image is viewed in 144 crops.

The extra data comes from the large ImageNet 2011 release\footnote{\texttt{http://www.image-net.org/releases}}, for which we only keep the classes whomself and whose children do not belong to the supervised classes. This is enabled by the super-subordinate (is-a) relation information provided with the WordNet distribution \cite{M95} because all ImageNet classes are nouns of WordNet. Both top-1 and top-5 results are reported in tables \ref{tab:expd} and \ref{tab:expr}.

In all experiments dustbin class provides best results. We believe that it is because the extra class is parameterized, which makes it adapt better on the unlabeled samples.

\begin{table}[h]
  \caption{ConvNet for the study of \(p\)}
  \label{tab:para}
  \begin{center}
    \begin{tabular}{ll}
      \multicolumn{1}{c}{\bf LAYERS}  &\multicolumn{1}{c}{\bf DESCRIPTION}
      \\ \hline \\
      1             &Conv 1024x5x5 \\
      2             &Pool 2x2 \\
      3             &Conv 1024x5x5 \\
      4-7           &Conv 1024x3x3 \\
      8             &Pool 2x2 \\
      9-11          &Full 2048 \\
    \end{tabular}
  \end{center}
\end{table}

\section{Effect of the Regularization Parameter}
\label{sec:para}

It is natural to ask how would the change of the probability \(p\) of sampling from unlabeled data affect the results. In this section we show the experiments. To prevent an exhaustive search on the regularization parameter from overfitting our models on the testing data, we use a different model for this section. It is described in table \ref{tab:para}, which has 9 parameterized layers in total. The design is inspired by \cite{SEZMFL13}. For each choice of \(p\) we conducted 6 experiments combining universum prescription models and dropout. The dropout layers are two ones added in between the fully-connected layers with dropout probability 0.5. Figure \ref{fig:para} shows the results.

From figure \ref{fig:para} we can conclude that increasing \(p\) will descrease generalization gap. However, we cannot make \(p\) too large since after a certain point the training collapses and both training and testing errors become worse. This confirms the assumptions and conclusions from theorem \ref{thm:rbdm}.

\begin{figure}[h]
  \begin{tabular}{ccc}
    Train & Test& Gap \\
    \includegraphics[width=0.3\columnwidth]{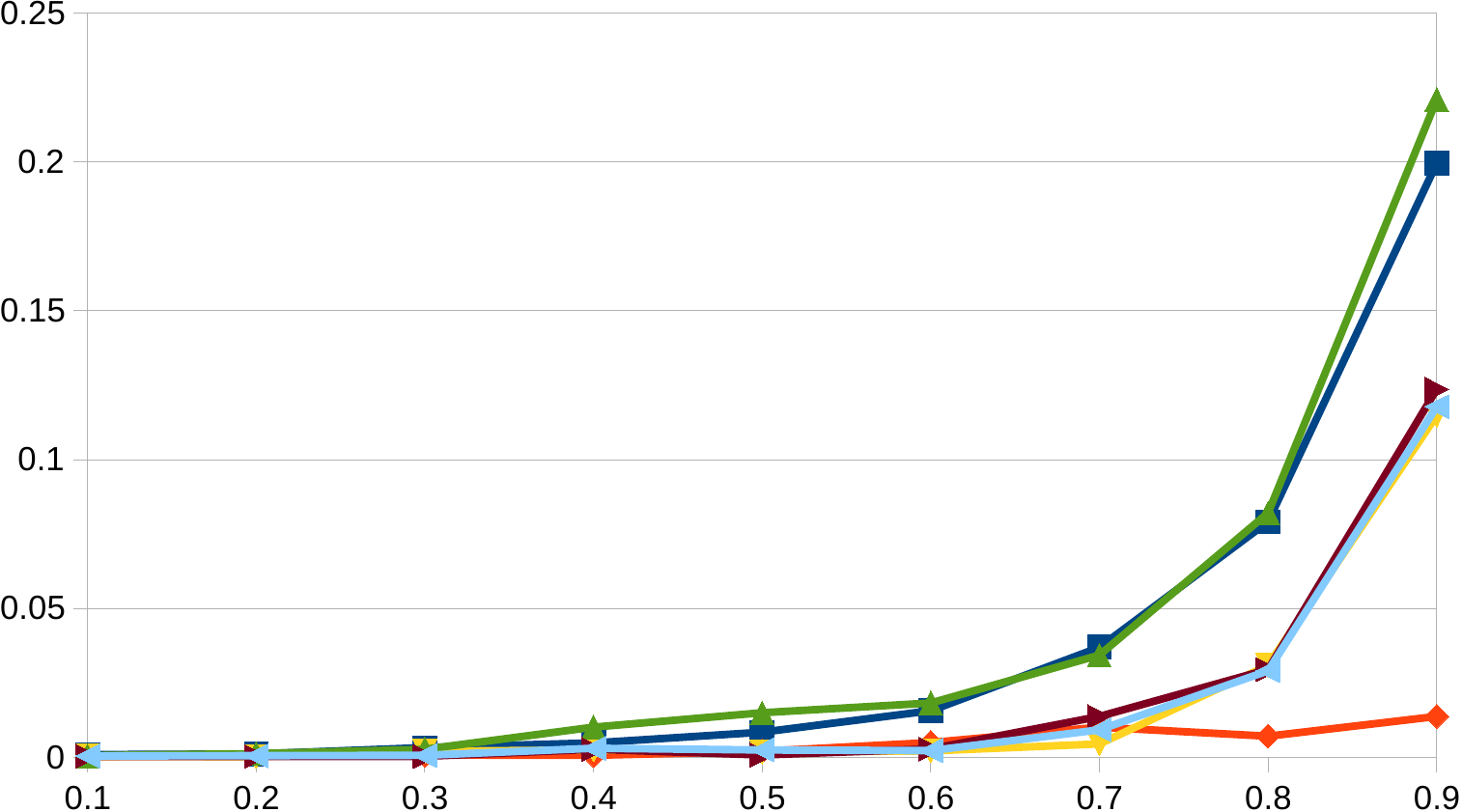} & \includegraphics[width=0.3\columnwidth]{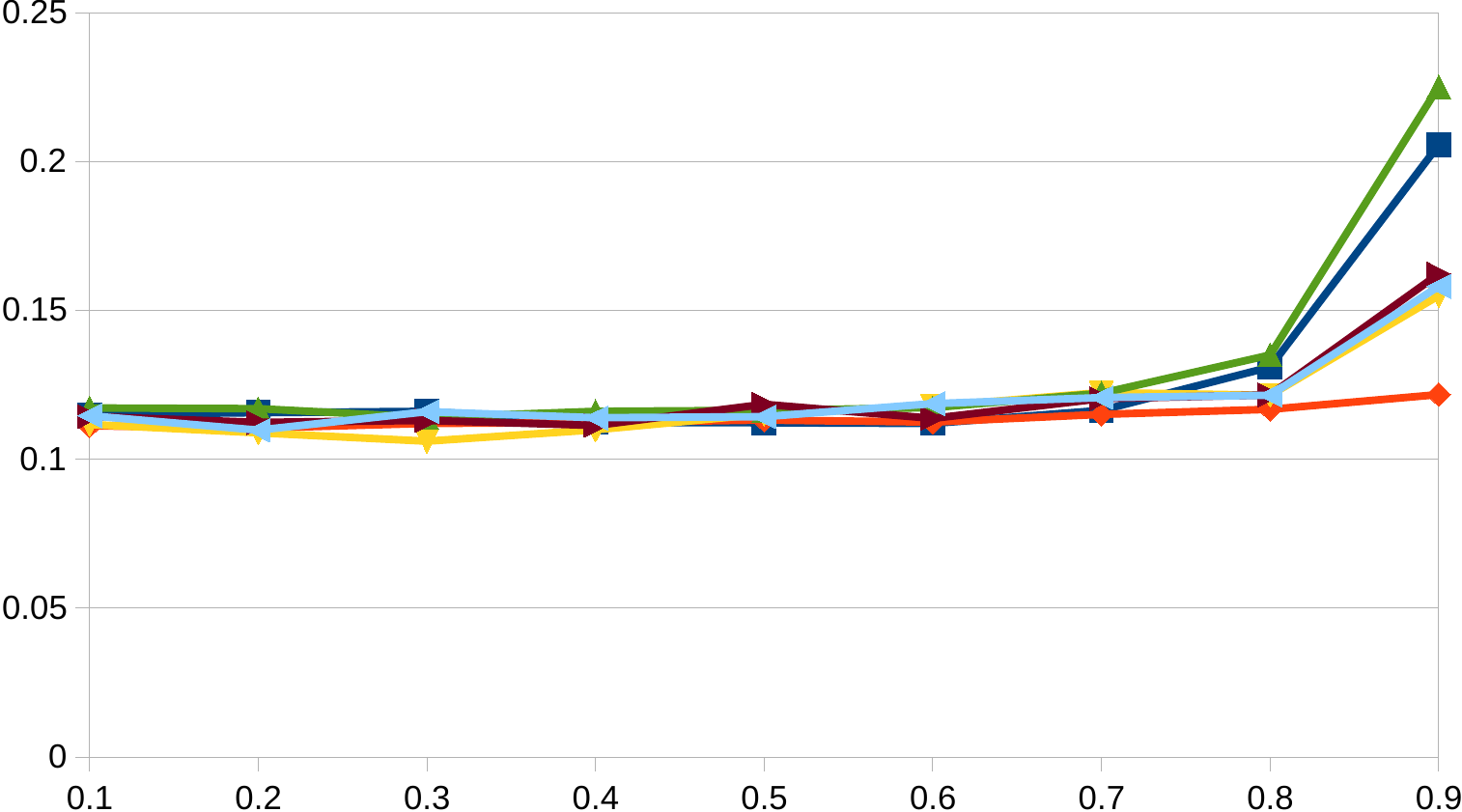} & \includegraphics[width=0.3\columnwidth]{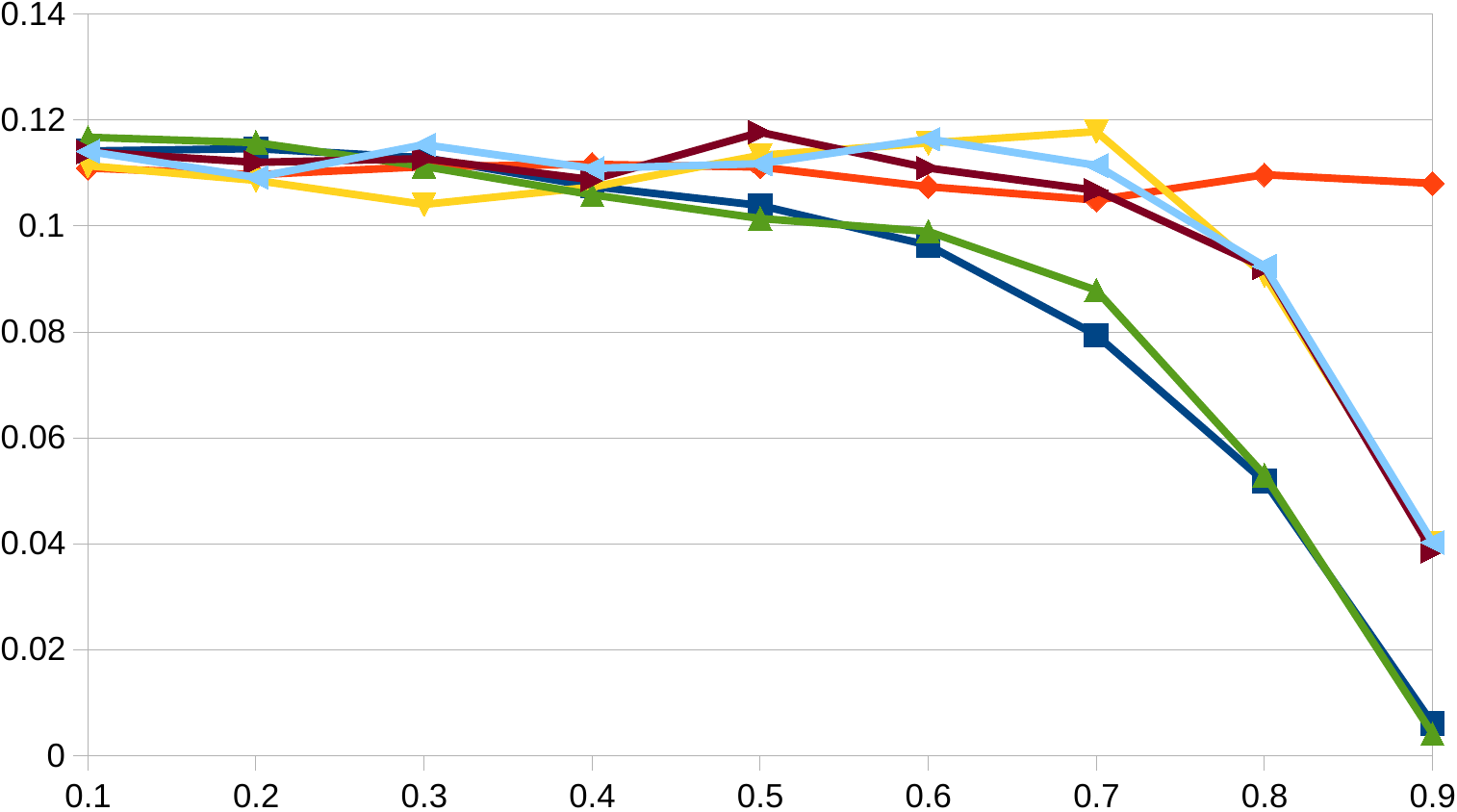} \\
    \includegraphics[width=0.3\columnwidth]{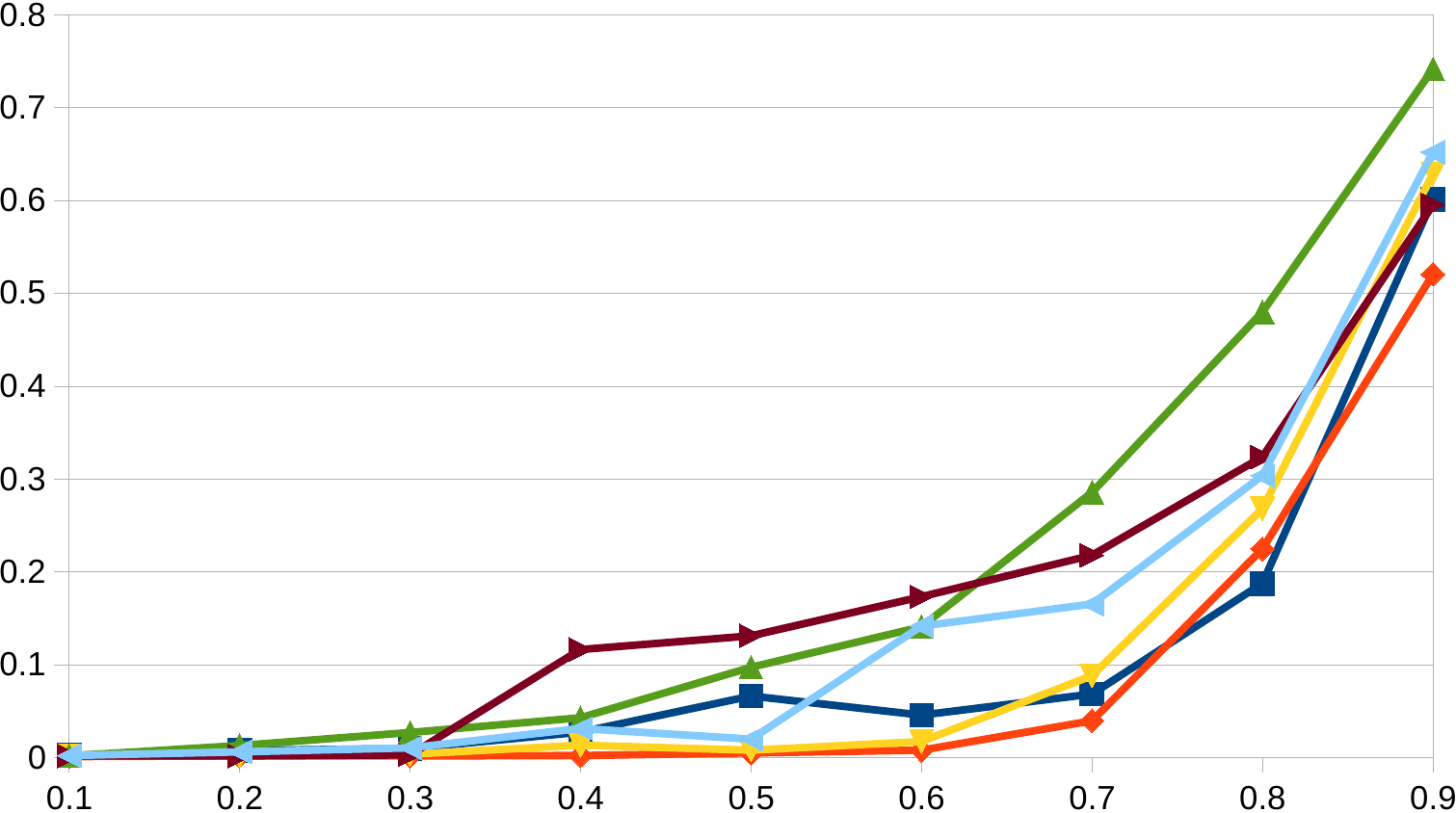} & \includegraphics[width=0.3\columnwidth]{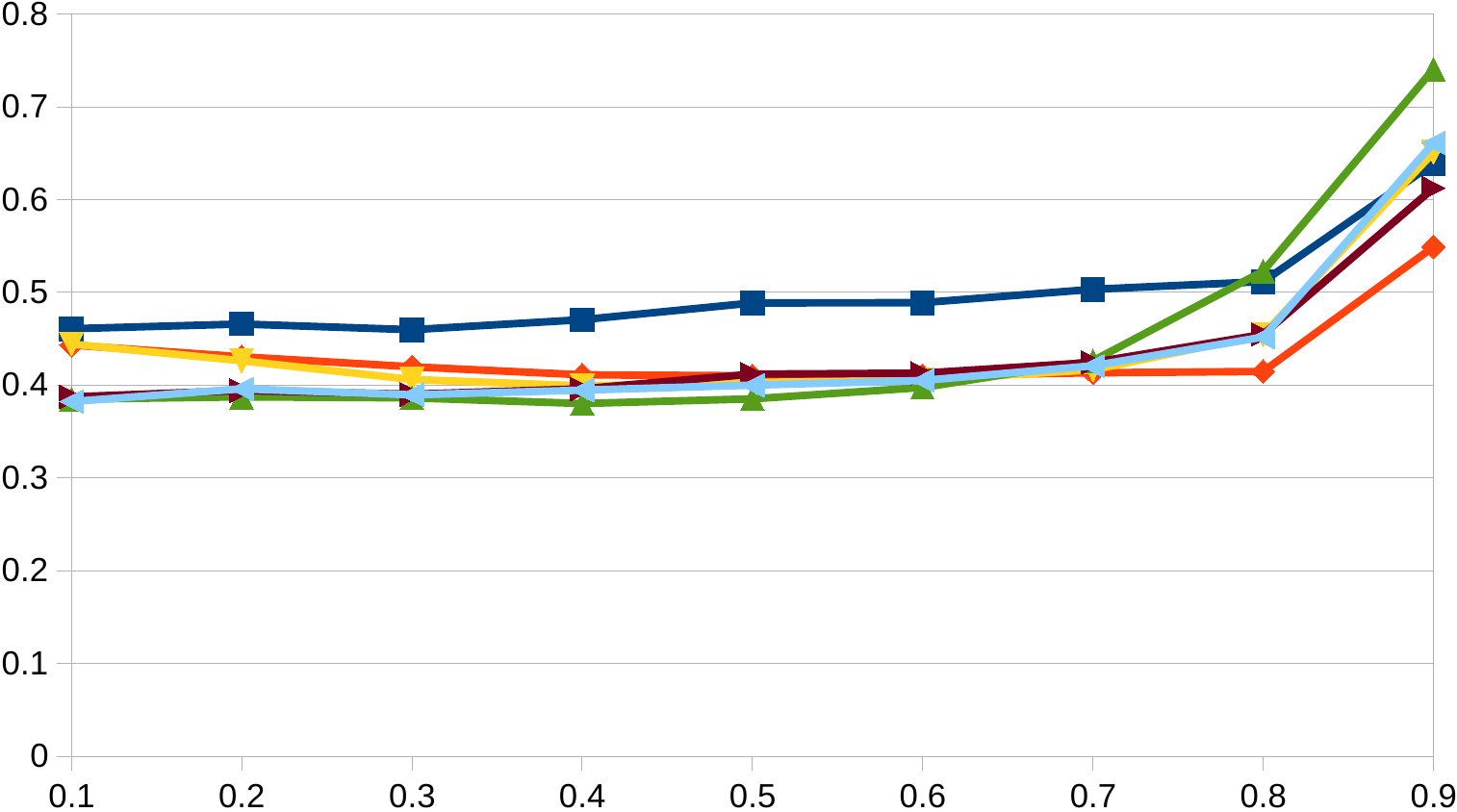} & \includegraphics[width=0.3\columnwidth]{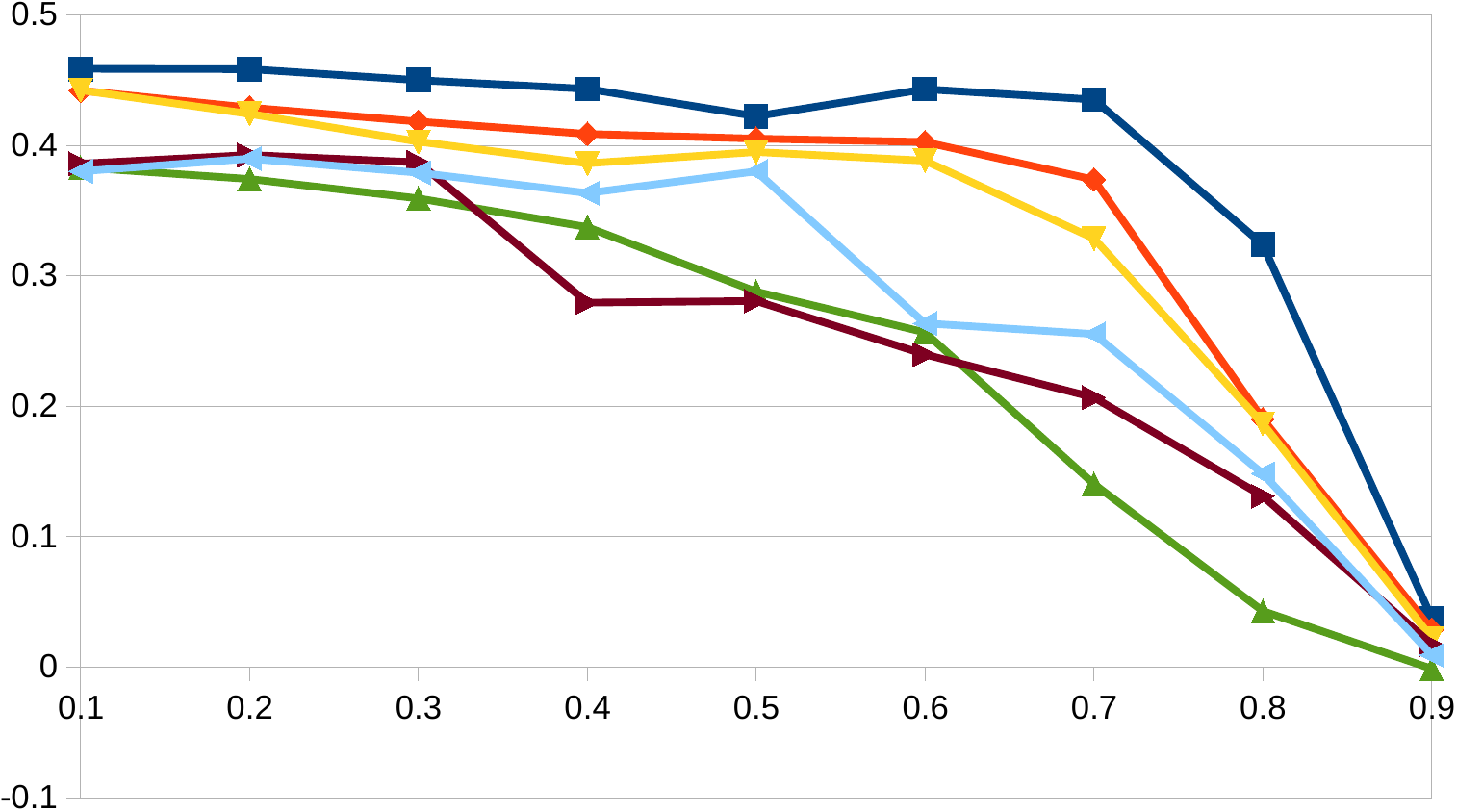} \\
    \includegraphics[width=0.3\columnwidth]{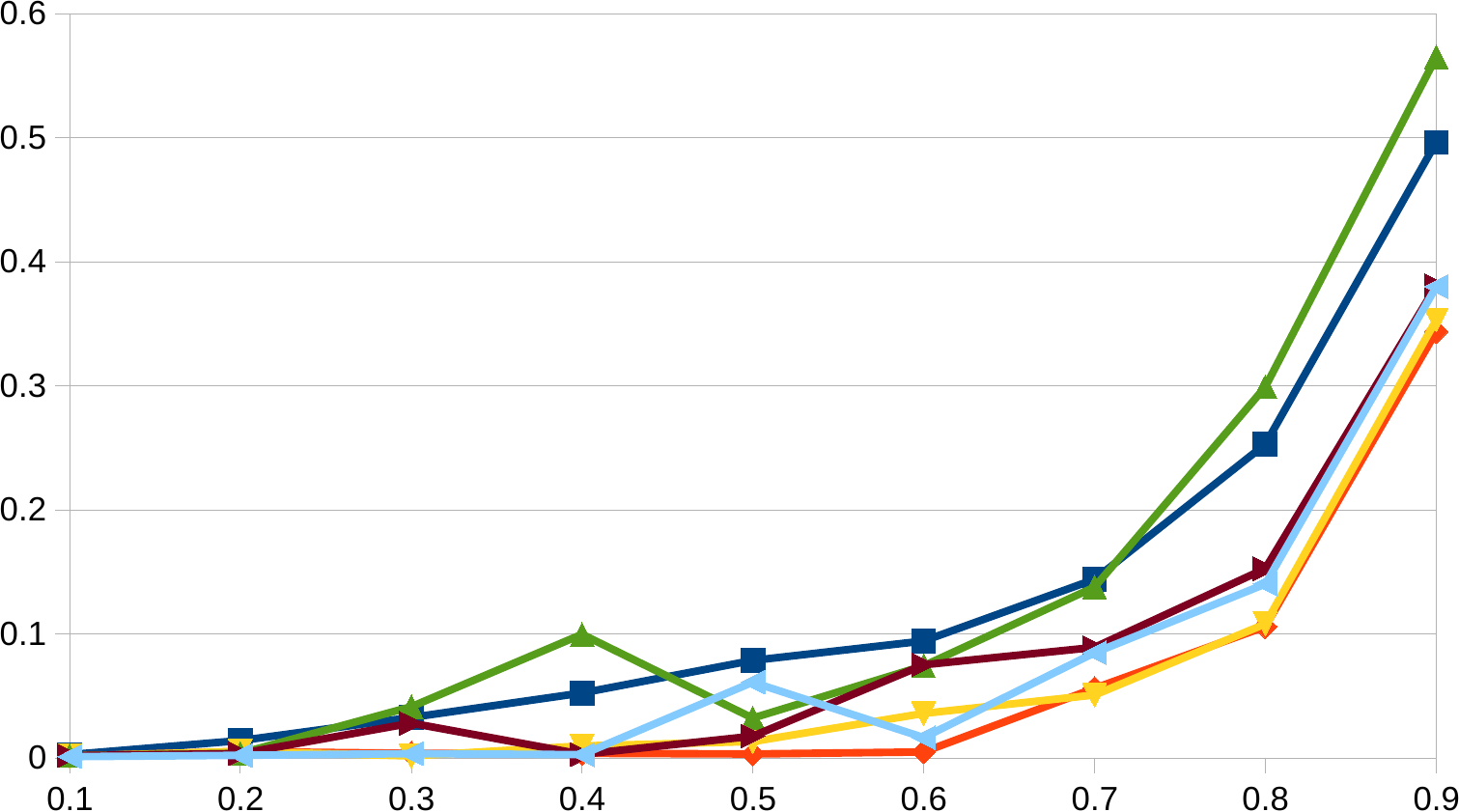} & \includegraphics[width=0.3\columnwidth]{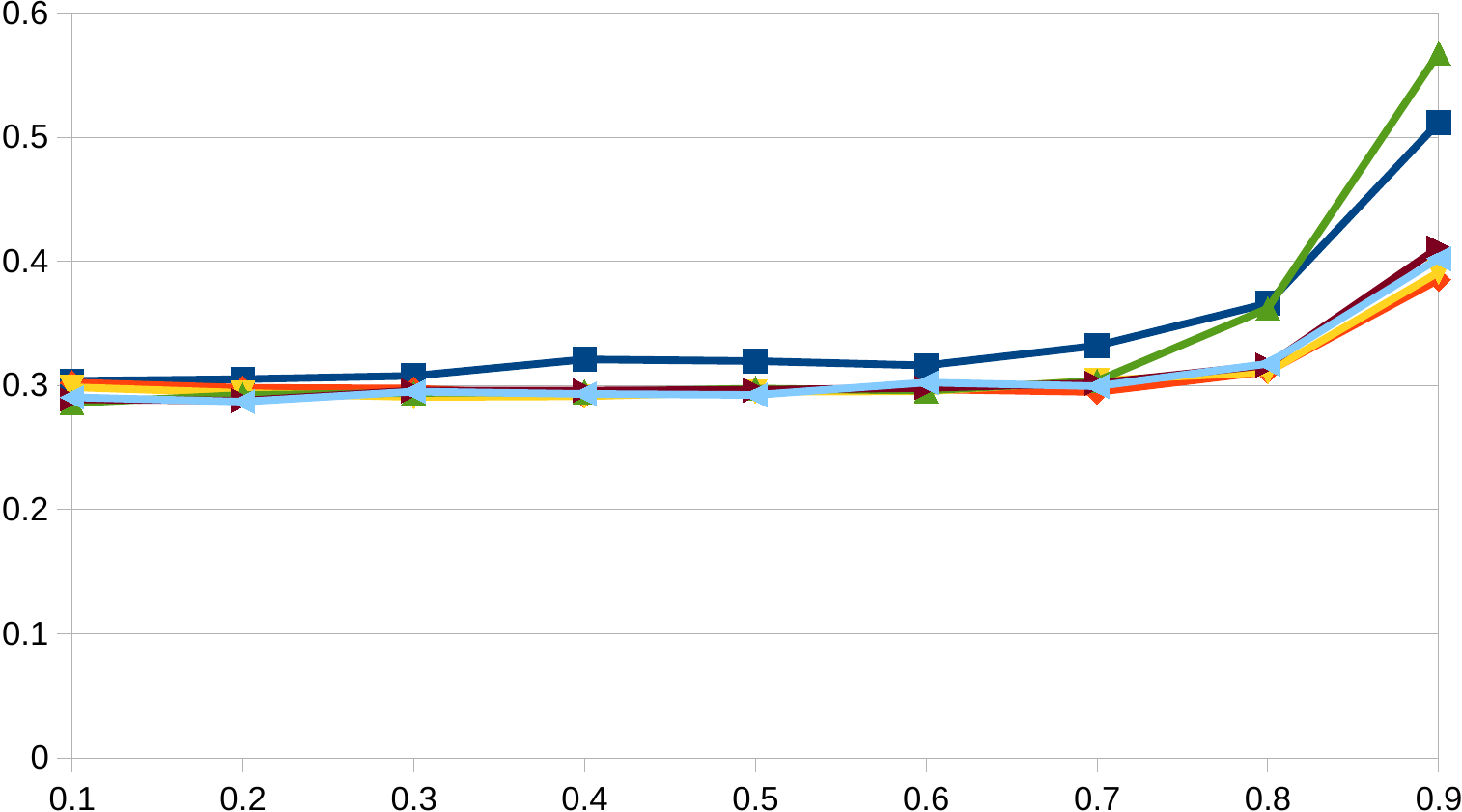} & \includegraphics[width=0.3\columnwidth]{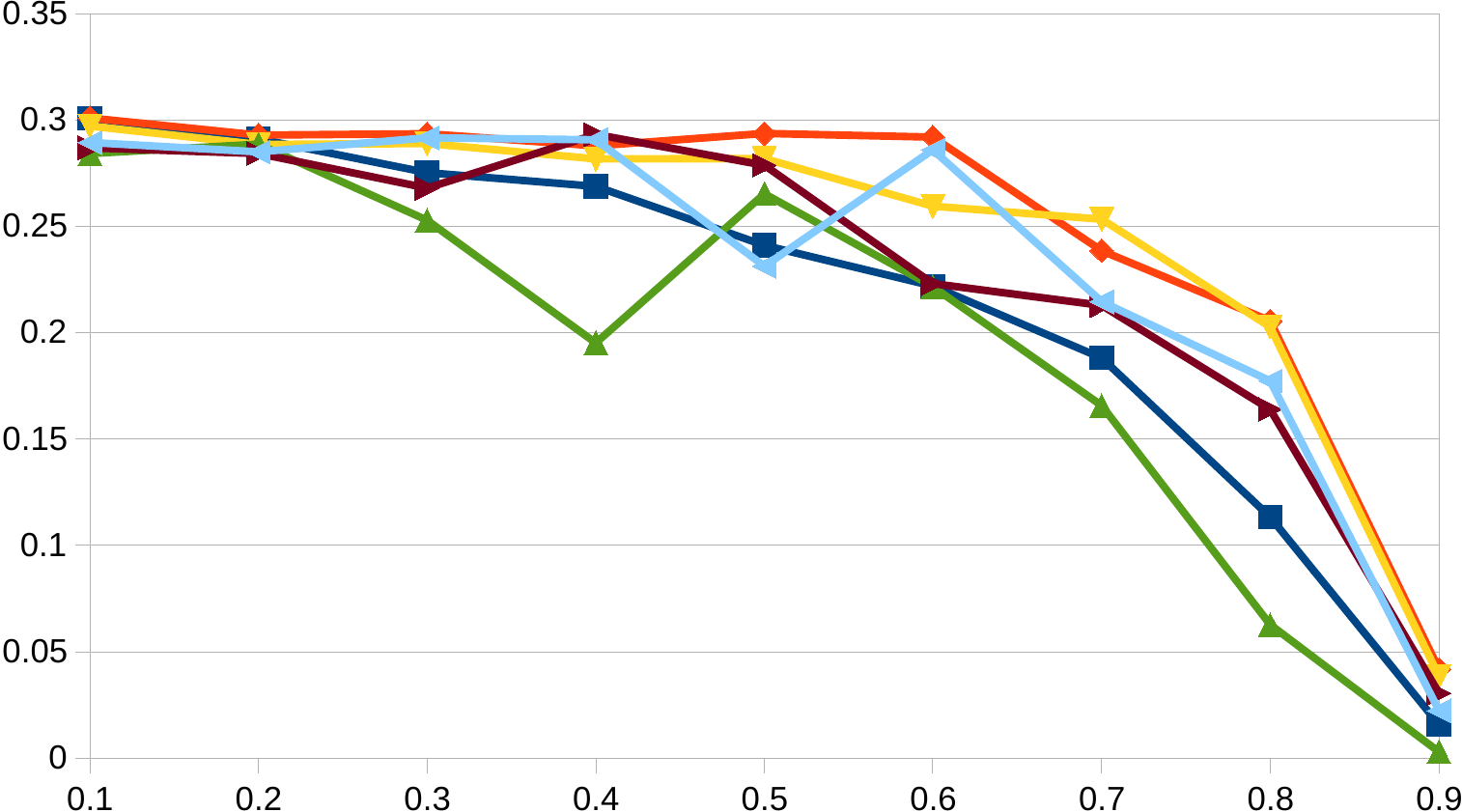} \\
    \includegraphics[width=0.3\columnwidth]{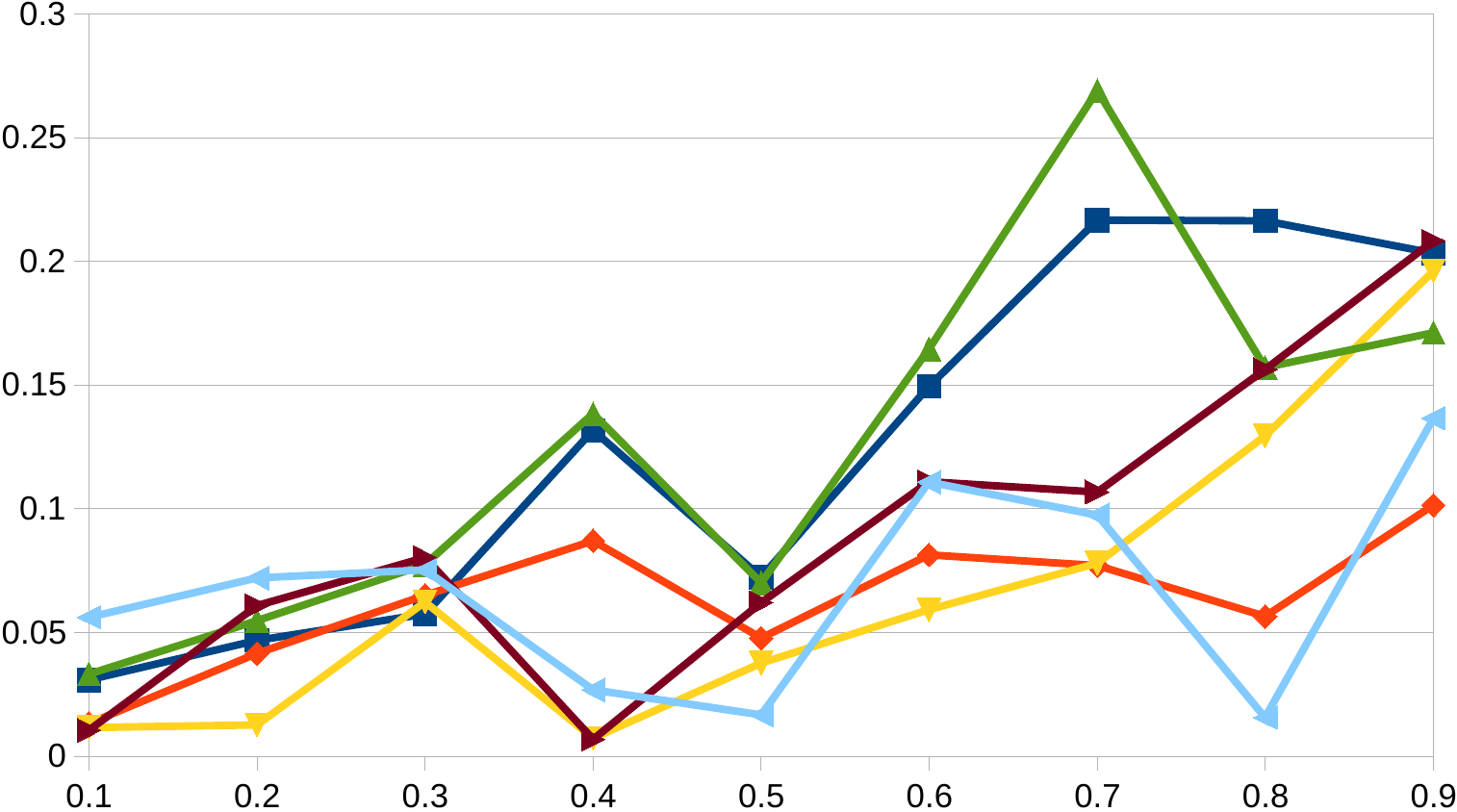} & \includegraphics[width=0.3\columnwidth]{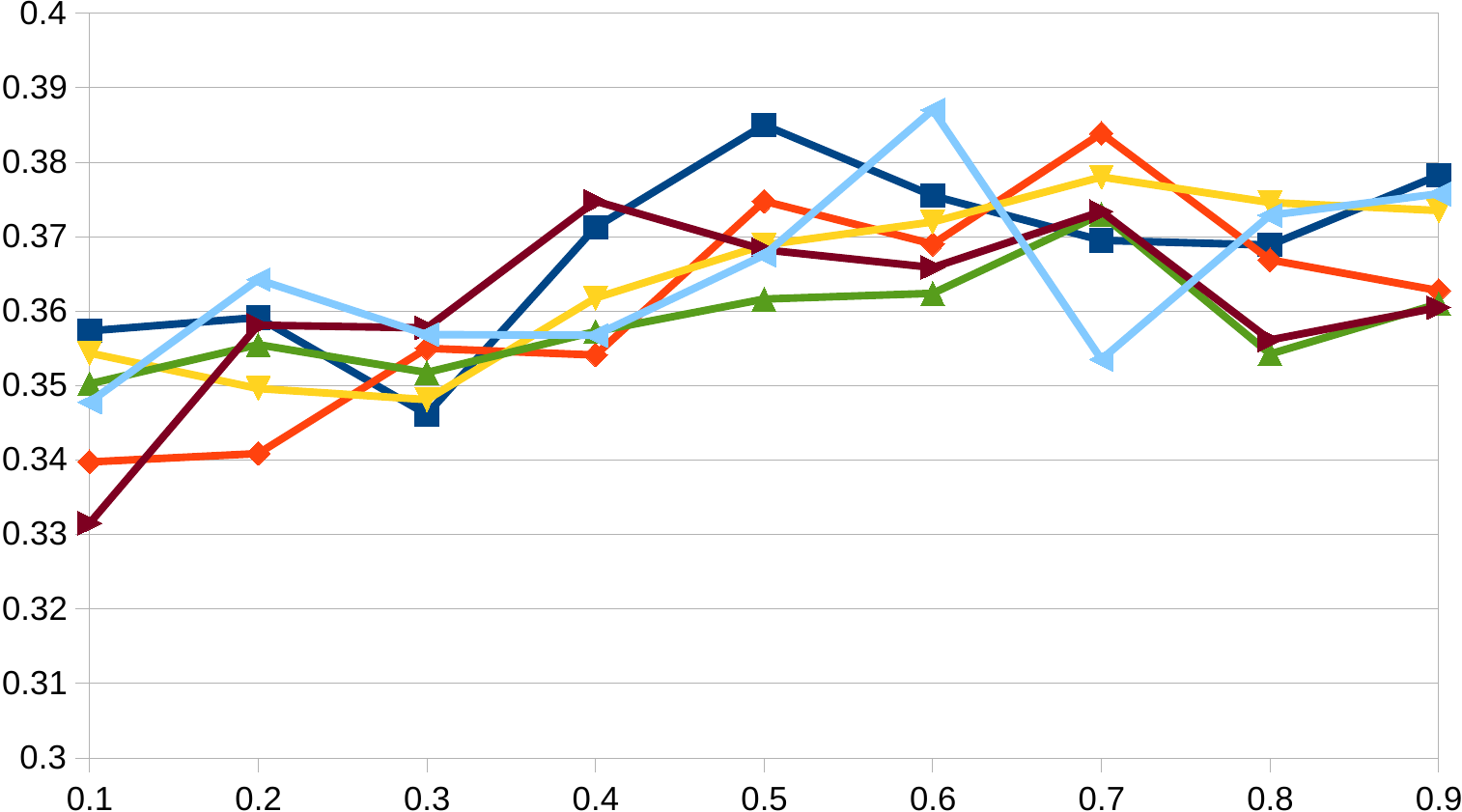} & \includegraphics[width=0.3\columnwidth]{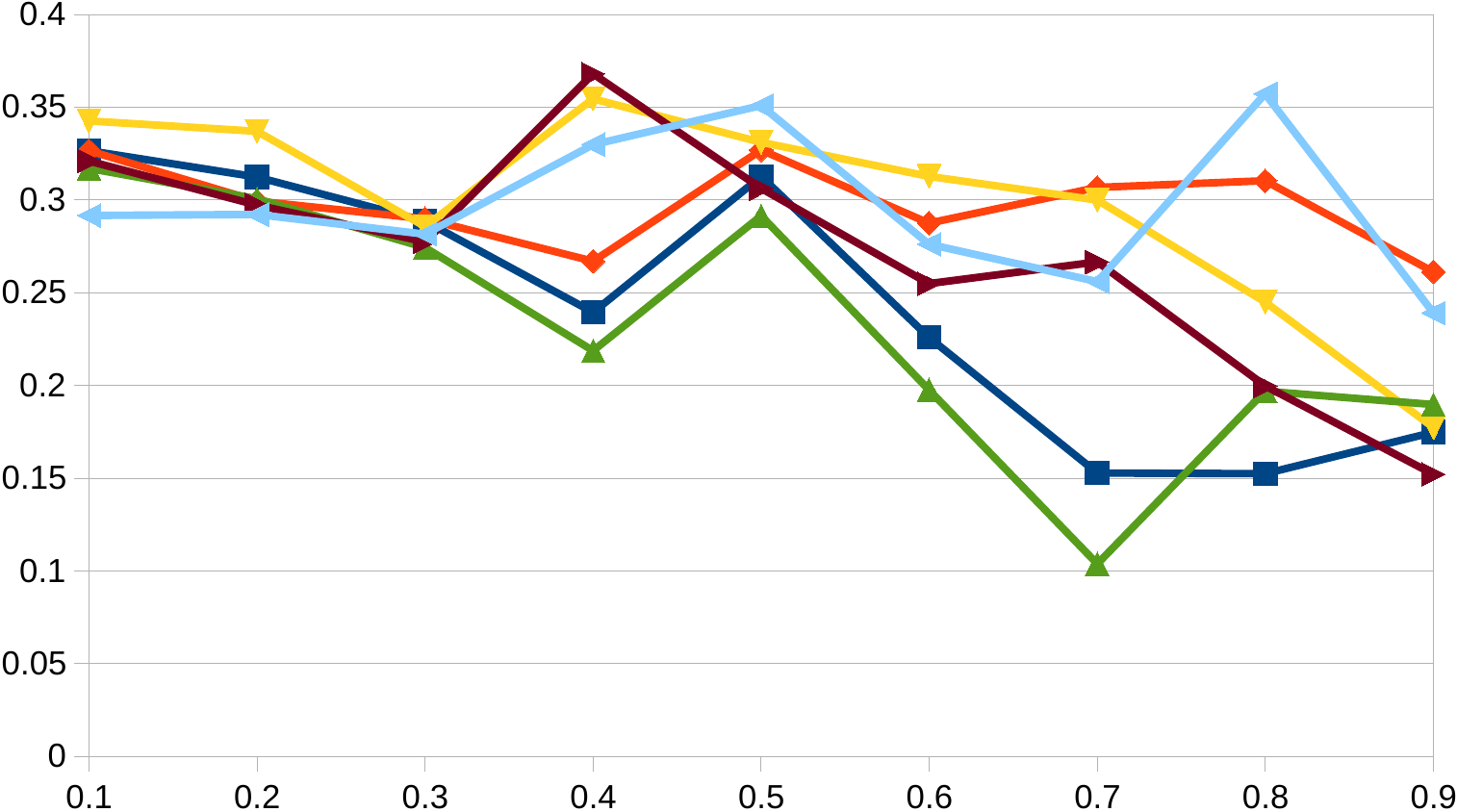} \\
    \multicolumn{3}{c}{\includegraphics[width=0.7\columnwidth]{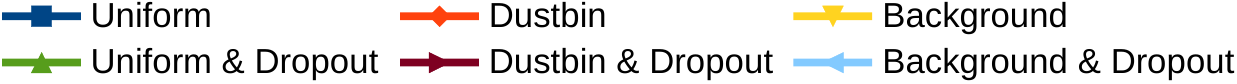}}
  \end{tabular}

  \caption{Experiments on regularization parameter. The four rows are CIFAR-10, CIFAR-100 fine-grained, CIFAR-100 coarse and STL-10 respectively.}
  \label{fig:para}
\end{figure}

Comparing between CIFAR-10/100 and STL-10, one conclusion is that that the model variance is affected by the combined size of labeled and unlabeled datasets. The variance on training and testing errors are extremely small on CIFAR-10/100 datasets because the extra data we used is almost unlimited (in total 80 million), but on STL-10 the variance seems to be large with much smaller combined size of training and extra datasets. This suggests that using universum prescription with a large abundance of extra data could improve the stability of supervised learning algorithms.

Finally, the comparison between using and not using dropout does not show a difference. This suggests that the regularization effect of universum prescription alone is comparable to that of dropout.

\section{Conclusion and Outlook}

This article shows that universum prescription can be used to regularize a multi-class classification problem using extra unlabeled data. Two assumptions are made. One is that loss used is negative log-likelihood and the other is negligible probability of a supervised sample existing in the unlabeled data. The loss assumption is a necessary detail rather than a limitation. The three universum prescription methods are uniform prescription, dustbin class and background class.

We further provided a theoretical justification. Theorem \ref{thm:rbdm} suggests that asymptotically the generalization ability of the supervised problem could be bounded by the joint problem, which has more samples due to the addition of unlabeled data. Experiments are done using CIFAR-10, CIFAR-100, STL-10 and ImageNet datasets. The effect of the regularization parameter is also studied empirically.

These experiments show that all three universum prescrition methods provide certain improvement over the generalization gap, whereas dustbin class constantly performs the best because the parameterized extra class can adapt better to the unlabeled samples. Further conclusions include that additional unlabeled data can improve the variance of models during training, and that the results are comparable to data-agnostic regularization using dropout.

In the future, we hope to apply these methods to a broader range of problems.

\section*{Acknowledgments}

We gratefully acknowledge NVIDIA Corporation with the donation of 2 Tesla K40 GPUs used for this research. Sainbayar Sukhbaatar offered many useful comments. Aditya Ramesh and Junbo Zhao helped cross-checking the proofs.

\bibliography{article}
\bibliographystyle{aaai}

\vfill
\pagebreak

\section*{Supplemental: proof of theorem \ref{thm:rbdm}}

This supplemental material shares the bibliography of the main paper. As an outline of our proof, we first establish a relation between \(\mathfrak{R}_m (\mathcal{F}, \mathbf{D})\) and \(\mathfrak{R}_m (\mathcal{F}, (1-p)\mathbf{D} + p \mathbf{U})\), and then another relation between \(\mathfrak{R}_n (\mathcal{F}, \mathbf{D})\) and \(\mathfrak{R}_m (\mathcal{F}, \mathbf{D})\). The first part requires the following lemmas.

\begin{lemma}[Separation of dataset on empirical Rademacher complexity]
  \label{lem:sepr}
  Let \(S\) be a dataset of size \(m\). If \(S_1\) and \(S_2\) are two non-overlap subset of \(S\) such that \(|S_1| = m - i\), \(|S_2| = i\) and \(S_1 \cup S_2 = S\), then the following two inequalities hold
  \begin{equation}
    \label{eq:sepr}
    \hat{\mathfrak{R}}_S (\mathcal{F}) \leq \frac{m - i}{m} \hat{\mathfrak{R}}_{S_1} (\mathcal{F}) + \frac{i}{m} \hat{\mathfrak{R}}_{S_2} (\mathcal{F}).
  \end{equation}
\end{lemma}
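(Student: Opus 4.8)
The plan is to exploit the subadditivity of the supremum together with the linearity of expectation over the Rademacher variables. First I would relabel the sample so that, after possibly reordering, $S_1 = (x_1, \dots, x_{m-i})$ and $S_2 = (x_{m-i+1}, \dots, x_m)$, and split the inner sum in the definition of $\hat{\mathfrak{R}}_S(\mathcal{F})$ at the boundary between the two subsets:
\begin{equation}
\hat{\mathfrak{R}}_S(\mathcal{F}) = \underset{\boldsymbol{\eta}}{\E}\left[ \sup_{f \in \mathcal{F}} \frac{1}{m}\left( \sum_{j=1}^{m-i} \eta_j f(x_j) + \sum_{j=m-i+1}^{m} \eta_j f(x_j) \right) \right].
\end{equation}

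The key step is the elementary inequality $\sup_f (A(f) + B(f)) \leq \sup_f A(f) + \sup_f B(f)$, applied pointwise for each fixed realization of $\boldsymbol{\eta}$, with $A(f)$ the partial sum over $S_1$ and $B(f)$ the partial sum over $S_2$. Taking expectation over $\boldsymbol{\eta}$ and using linearity then gives an upper bound that splits into two separate expectations, one involving only the Rademacher variables attached to $S_1$ and one involving only those attached to $S_2$.

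Each of these two terms can then be recognized as a rescaled empirical Rademacher complexity of the corresponding subset. Since the term coming from $S_1$ does not depend on the variables $\eta_{m-i+1}, \dots, \eta_m$, integrating them out is trivial and leaves $\underset{\eta_1, \dots, \eta_{m-i}}{\E}[\sup_f \sum_{j=1}^{m-i} \eta_j f(x_j)] = (m-i)\,\hat{\mathfrak{R}}_{S_1}(\mathcal{F})$ once one accounts for the $1/(m-i)$ normalization in the definition of $\hat{\mathfrak{R}}_{S_1}$; similarly the $S_2$ term equals $i\,\hat{\mathfrak{R}}_{S_2}(\mathcal{F})$. Collecting the $1/m$ prefactor from the original display then yields exactly the claimed bound.

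I expect no serious obstacle: the argument rests on a one-line convexity fact about the supremum, and the only point demanding genuine care is the bookkeeping of normalization constants — the full sample is divided by $m$, whereas each subset complexity carries its own factor $1/(m-i)$ or $1/i$, so the weights $(m-i)/m$ and $i/m$ emerge precisely from matching these factors. It is worth noting in passing that the two subsets use disjoint sets of Rademacher variables, which is what makes the factorization of the expectation clean and lets the irrelevant variables be integrated out without penalty.
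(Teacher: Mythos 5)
Your proposal is correct and follows essentially the same route as the paper's proof: both split the sum at the boundary between $S_1$ and $S_2$, apply the subadditivity (convexity) of the supremum for each fixed realization of $\boldsymbol{\eta}$, and then integrate out the Rademacher variables not attached to each subset to recover the rescaled subset complexities with weights $(m-i)/m$ and $i/m$. No gaps.
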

\begin{proof}
  Let \((x_j, y_j) \in S_1\) for \(j = 1, 2, \dots, m - i\) and \((x_j, y_j) \in S_2\) for \(i = m - j + 1, m - j + 2, \dots, m\). Denote \(\mathbf{N}\) as the discrete uniform distribution on \(\{1, -1\}\). We can derive by the convexity of supremum and symmetry of \(\mathbf{N}\)
  \[
  \begin{aligned}
    & \hat{\mathfrak{R}}_{S} (\mathcal{F}) = \underset{\boldsymbol{\eta} \sim \mathbf{N}^{m}}\E \left[ \sup_{f \in \mathcal{F}} \frac{1}{m} \sum_{j=1}^{m} \eta_j f(x_j) \right] \\
    = &\frac{2}{m} \underset{\boldsymbol{\eta} \sim \mathbf{N}^{m}}\E \left[ \sup_{f \in \mathcal{F}} \left( \frac{1}{2} \sum_{j=1}^{m - i} \eta_j f(x_j) + \frac{1}{2} \sum_{j=m-i+1}^{m} \eta_j f(x_j) \right) \right] \\
    \leq & \frac{2}{m} \underset{\boldsymbol{\eta} \sim \mathbf{N}^{m}}\E \left[ \frac{1}{2} \sup_{f \in \mathcal{F}} \left( \sum_{j=1}^{m - i} \eta_j f(x_j) \right) + \frac{1}{2} \sup_{f \in \mathcal{F}} \left( \sum_{j=m-i+1}^{m} \eta_j f(x_j) \right) \right] \\
    = & \frac{m - i}{m} \underset{\boldsymbol{\eta} \sim \mathbf{N}^{m - i}}\E \left[ \sup_{f \in \mathcal{F}} \frac{1}{m-i} \sum_{j=1}^{m - i} \eta_j f(x_j) \right] + \\
    & \quad \frac{i}{m} \underset{\boldsymbol{\eta} \sim \mathbf{N}^{i}}\E \left[ \sup_{f \in \mathcal{F}} \frac{1}{i} \sum_{j=m-i+1}^{m} \eta_j f(x_j) \right] \\
    = &\frac{m - i}{m} \hat{\mathfrak{R}}_{S_1} (\mathcal{F}) + \frac{i}{m} \hat{\mathfrak{R}}_{S_2} (\mathcal{F}).
  \end{aligned}
  \]
\end{proof}

\begin{lemma}[Sample size inequality for Rademacher complexity]
  \label{lem:sinr}
  Assume \(0 \leq n \leq m\). If \(|S_n| = n\), \(|S_m| = m\) and \(S_m = S_n \cup \{x_{n+1}, x_{n+2}, \dots, x_{m}\}\), then
  \begin{equation}
    \label{eq:sine}
    n \hat{\mathfrak{R}}_{S_n} (\mathcal{F}) \leq m \hat{\mathfrak{R}}_{S_m} (\mathcal{F}),
  \end{equation}
  and
  \begin{equation}
    \label{eq:sint}
    n \mathfrak{R}_n (\mathcal{F}, \mathbf{D}) \leq m \mathfrak{R}_m (\mathcal{F}, \mathbf{D}).
  \end{equation}
\end{lemma}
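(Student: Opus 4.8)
The plan is to establish the empirical inequality \eqref{eq:sine} first and then lift it to the distributional statement \eqref{eq:sint} by averaging over the draw of the sample. The key simplification is that multiplying an empirical Rademacher complexity by its sample size cancels the normalizing factor, so the quantities to compare are the expected suprema of the \emph{unnormalized} Rademacher sums,
\[
m\,\hat{\mathfrak{R}}_{S_m}(\mathcal{F}) = \underset{\boldsymbol{\eta}\sim\mathbf{N}^m}\E\left[\sup_{f\in\mathcal{F}}\sum_{j=1}^{m}\eta_j f(x_j)\right], \qquad n\,\hat{\mathfrak{R}}_{S_n}(\mathcal{F}) = \underset{\boldsymbol{\eta}\sim\mathbf{N}^n}\E\left[\sup_{f\in\mathcal{F}}\sum_{j=1}^{n}\eta_j f(x_j)\right].
\]
Because $S_n$ is exactly the first $n$ points of $S_m$, the right-hand sum extends the left-hand sum by the extra terms $\sum_{j=n+1}^{m}\eta_j f(x_j)$, and the task reduces to showing these extra zero-mean terms can only increase the expected supremum.

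The central step is to condition on $\eta_1,\dots,\eta_n$ and integrate out the additional variables $\eta_{n+1},\dots,\eta_m$ first. Using that the expectation of a supremum dominates the supremum of the expectation (for any fixed $f_0$, the inner expectation is at least its value at $f_0$), together with $\E[\eta_j]=0$ for $j>n$, I would write
\[
\underset{\eta_{n+1},\dots,\eta_m}\E\left[\sup_{f\in\mathcal{F}}\sum_{j=1}^{m}\eta_j f(x_j)\right] \geq \sup_{f\in\mathcal{F}}\;\underset{\eta_{n+1},\dots,\eta_m}\E\left[\sum_{j=1}^{m}\eta_j f(x_j)\right] = \sup_{f\in\mathcal{F}}\sum_{j=1}^{n}\eta_j f(x_j).
\]
Taking the remaining expectation over $\eta_1,\dots,\eta_n$ of both sides then gives $n\,\hat{\mathfrak{R}}_{S_n}(\mathcal{F}) \leq m\,\hat{\mathfrak{R}}_{S_m}(\mathcal{F})$, which is \eqref{eq:sine}.

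For the distributional inequality \eqref{eq:sint}, I would draw $S_m \sim \mathbf{D}^m$ so that its first $n$ coordinates form a sample $S_n$, apply the empirical bound pointwise, and take the expectation over $S_m$. By the definition of Rademacher complexity this gives $\E_{S_m}[\hat{\mathfrak{R}}_{S_m}(\mathcal{F})] = \mathfrak{R}_m(\mathcal{F},\mathbf{D})$, while $\E_{S_m}[\hat{\mathfrak{R}}_{S_n}(\mathcal{F})] = \mathfrak{R}_n(\mathcal{F},\mathbf{D})$ because $\hat{\mathfrak{R}}_{S_n}$ depends only on the first $n$ coordinates, whose marginal under $\mathbf{D}^m$ is exactly $\mathbf{D}^n$. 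Linearity of expectation then yields \eqref{eq:sint}.

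The main obstacle I anticipate is getting the inequality in the right direction: here I need a \emph{lower} bound on $m\,\hat{\mathfrak{R}}_{S_m}(\mathcal{F})$, which is the reverse of the subadditivity bound $\sup(a+b)\le\sup a+\sup b$ used in Lemma~\ref{lem:sepr}. The correct tool is instead the opposite-direction Jensen inequality $\E\sup\ge\sup\E$ paired with the zero-mean property of the added Rademacher variables. Recognizing that a different argument from Lemma~\ref{lem:sepr} is required, and carefully justifying the marginalization of the first $n$ coordinates in the distributional step, are the two points that will demand the most care; everything else is routine.
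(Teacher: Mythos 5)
Your proposal is correct and rests on the same core argument as the paper's proof: the extra Rademacher variables have zero mean, so integrating them out underneath the supremum (via \(\E\sup \ge \sup\E\)) can only decrease the unnormalized quantity \(m\,\hat{\mathfrak{R}}_{S_m}(\mathcal{F})\), and the distributional inequality follows by taking expectations over the sample, the first \(n\) coordinates of which are marginally \(\mathbf{D}^n\). The only organizational difference is that the paper adds one point at a time using an explicit maximizer \(f_0\) and then inducts, whereas you marginalize all \(m-n\) extra signs in a single step (incidentally avoiding the paper's implicit assumption that the supremum is attained).
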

\begin{proof}
  First of all, it is obvious that inequality \ref{eq:sint} can be established using mathematical induction if we have \(m \mathfrak{R}_m (\mathcal{F}, \mathbf{D}) \leq (m + 1) \mathfrak{R}_{m + 1} (\mathcal{F}, \mathbf{D})\) for all \(m \geq 0\). To prove this, we first establish that if \(S_m = \{x_1, x_2, \dots, x_m\}\) and \(S_{m + 1} = \{x_1, x_2, \dots, x_m, x_{m+1}\}\) (i.e., \(S_{m+1} = S_m \cup \{x_{m + 1}\}\)), then \(m \hat{\mathfrak{R}}_{S_m} (\mathcal{F}) \leq (m+1) \hat{\mathfrak{R}}_{S_{m+1}} (\mathcal{F})\), which can also establish inequality \ref{eq:sine}.

  For any \( \boldsymbol{\eta}_m =\{\eta_1, \eta_2, \dots, \eta_m\} \) and \( \boldsymbol{\eta}_{m + 1} =\{\eta_1, \eta_2, \dots, \eta_m, \eta_{m+1}\}\), that is, \( \boldsymbol{\eta}_{m + 1} = \boldsymbol{\eta}_m \cup \{\eta_{m + 1}\}\), let \( f_0 = \argmax_{f \in \mathcal{F}} \sum_{i = 1}^{m} \eta_i f(x_i)\). By definition of supremum, we have
  \[
  \begin{aligned}
  \sup_{f \in \mathcal{F}} \sum_{i = 1}^{m + 1} \eta_i f(x_i) & \geq \sum_{i = 1}^{m + 1} \eta_i f_0 (x_i) \\
  & = \sum_{i = 1}^{m} \eta_i f_0 (x_i) + \eta_{m+1}f_0 (x_{m+1}) \\
  & = \sup_{f \in \mathcal{F}} \sum_{i = 1}^{m} \eta_i f(x_i) + \eta_{m+1} f_0(x_{m+1}).
  \end{aligned}
  \]
  Taking espectation over \(\boldsymbol{\eta}_{m+1}\), by the symmetry of distribution \(\mathbf{N}\), we obtain
  \[
  \begin{aligned}
    & \underset{\boldsymbol{\eta}_{m+1} \sim \mathbf{N}^{m+1}}\E \left[ \sup_{f \in \mathcal{F}} \sum_{i = 1}^{m + 1} \eta_i f(x_i) \right] \\
    & \geq \underset{\boldsymbol{\eta}_{m+1} \sim \mathbf{N}^{m}}\E \left[ \sup_{f \in \mathcal{F}} \sum_{i = 1}^{m} \eta_i f(x_i) + \eta_{m+1} f_0(x_{m+1})  \right] \\
    & = \underset{\boldsymbol{\eta}_{m} \sim \mathbf{N}^{m}}\E \left[ \sup_{f \in \mathcal{F}} \sum_{i = 1}^{m} \eta_i f(x_i) \right] + \underset{\eta_{m + 1} \sim \mathbf{N}}\E \left[ \eta_{m+1} \right] f_0(x_{m+1}) \\
    & = \underset{\boldsymbol{\eta}_{m} \sim \mathbf{N}^{m}}\E \left[ \sup_{f \in \mathcal{F}} \sum_{i = 1}^{m} \eta_i f(x_i) \right].
  \end{aligned}
  \]
  By the definition of \(\hat{\mathfrak{R}}_{S_m} (\mathcal{F})\), the inequality above implies \(m \hat{\mathfrak{R}}_{S_m} (\mathcal{F}) \leq (m+1) \hat{\mathfrak{R}}_{S_{m+1}} (\mathcal{F})\). Then, by taking espectation over \(S_{m+1}\) we can obtain
  \[
  \begin{aligned}
  (m + 1) \mathfrak{R}_{m + 1} (\mathcal{F}, \mathbf{D}) & = \underset{S_{m+1} \sim \mathbf{D}^{m+1}} \E \left[ (m+1) \hat{\mathfrak{R}}_{S_{m+1}} \right] \\
  & \geq \underset{S_{m} \sim \mathbf{D}^{m}} \E \left[ m \hat{\mathfrak{R}}_{S_{m}} \right] = m \mathfrak{R}_{m} (\mathcal{F}, \mathbf{D}).
  \end{aligned}
  \]
  The lemma can therefore be easily established by mathematical induction.
\end{proof}

Using the lemmas above, the relation between \(\mathfrak{R}_m (\mathcal{F}, \mathbf{D})\) and \(\mathfrak{R}_m (\mathcal{F}, (1-p)\mathbf{D} + p \mathbf{U})\) can be established as the following theorem, by assuming \(p \leq 0.5\).

\begin{theorem}[Relation of Rademacher complexities in distribution mixture]
  \label{thm:rerm}
  If \(p \leq 0.5\), then
  \begin{equation}
    \label{eq:rerm}
    \mathfrak{R}_m (\mathcal{F}, \mathbf{D}) \leq \frac{2 - p}{1 - p} \mathfrak{R}_m (\mathcal{F}, (1-p)\mathbf{D} + p \mathbf{U}).
  \end{equation}
\end{theorem}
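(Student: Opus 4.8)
The plan is to establish the slightly stronger bound $\mathfrak{R}_m(\mathcal{F},\mathbf{D}) \le 2\,\mathfrak{R}_m(\mathcal{F},(1-p)\mathbf{D}+p\mathbf{U})$; since $2 \le (2-p)/(1-p)$ for all $p\in[0,\tfrac12]$, the stated inequality follows immediately. Abbreviate the mixture by $\mathbf{M}=(1-p)\mathbf{D}+p\mathbf{U}$ and define the deterministic function $g(k)=k\,\mathfrak{R}_k(\mathcal{F},\mathbf{D})$. The key device is to draw a single size-$m$ sample $S\sim\mathbf{M}^m$ and condition on the number $N$ of its points that came from $\mathbf{D}$: here $N\sim\mathrm{Bin}(m,1-p)$, and conditionally on $N=n$ the $\mathbf{D}$-points $S_D$ form an i.i.d.\ $\mathbf{D}^n$ sample sitting inside $S$, while the remaining $m-N$ points are an independent $\mathbf{U}$-sample.

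First I would lower-bound the mixture complexity. Because $S_D\subseteq S$ with $|S_D|=N\le m=|S|$, inequality \ref{eq:sine} of Lemma \ref{lem:sinr} gives $N\,\hat{\mathfrak{R}}_{S_D}(\mathcal{F})\le m\,\hat{\mathfrak{R}}_S(\mathcal{F})$ for every realization. Taking expectations over $S$ and conditioning on $N$, and using $\E[\hat{\mathfrak{R}}_{S_D}\mid N=n]=\mathfrak{R}_n(\mathcal{F},\mathbf{D})$, this becomes
\[ m\,\mathfrak{R}_m(\mathcal{F},\mathbf{M}) \ge \E_N[\,N\,\mathfrak{R}_N(\mathcal{F},\mathbf{D})\,] = \E_N[\,g(N)\,]. \]
It therefore suffices to prove $\E_N[g(N)]\ge\tfrac12\,g(m)$.

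The second step compares $\E_N[g(N)]$ to $g(m)$ through two structural facts about $g$. Monotonicity of $g$ is precisely inequality \ref{eq:sint} of Lemma \ref{lem:sinr}. Subadditivity, $g(a+b)\le g(a)+g(b)$ whenever $a+b=m$, follows by multiplying inequality \ref{eq:sepr} of Lemma \ref{lem:sepr} by $m$ and taking expectations over $S\sim\mathbf{D}^m$, whose two subsamples are i.i.d.\ $\mathbf{D}^a$ and $\mathbf{D}^b$. Applying subadditivity to the split $m=N+(m-N)$ gives $g(N)\ge g(m)-g(m-N)$ pointwise, hence $\E_N[g(N)]\ge g(m)-\E_N[g(m-N)]$. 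Now $m-N\sim\mathrm{Bin}(m,p)$, and since $p\le\tfrac12$ this law is stochastically dominated by that of $N\sim\mathrm{Bin}(m,1-p)$; as $g$ is nondecreasing, $\E_N[g(m-N)]\le\E_N[g(N)]$. Combining, $2\,\E_N[g(N)]\ge g(m)$, i.e.\ $\E_N[g(N)]\ge\tfrac12\,g(m)=\tfrac12\,m\,\mathfrak{R}_m(\mathcal{F},\mathbf{D})$, and substituting into the first step gives $\mathfrak{R}_m(\mathcal{F},\mathbf{M})\ge\tfrac12\,\mathfrak{R}_m(\mathcal{F},\mathbf{D})$, as required.

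The main obstacle is Step 2, and within it two points deserve care. The first is the clean passage from the empirical/expected inequalities of the two lemmas to genuine monotonicity and subadditivity of the single deterministic function $g$; this hinges on correctly identifying the conditional law of $S_D$ given $N$ as $\mathbf{D}^N$, so that averaging the empirical statements reproduces $\mathfrak{R}_k(\mathcal{F},\mathbf{D})$. The second, and the only place the hypothesis $p\le\tfrac12$ enters, is the stochastic-dominance comparison between $\mathrm{Bin}(m,p)$ and $\mathrm{Bin}(m,1-p)$, which controls the ``leftover'' $\mathbf{U}$-mass $m-N$; if $p$ exceeded $\tfrac12$ this comparison would reverse and the argument would break, matching the theorem's requirement. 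I note that this route in fact yields the constant $2$, which is why the stated factor $(2-p)/(1-p)$ holds a fortiori; a proof aiming at that exact constant would simply replace the stochastic-dominance step by a looser bound on $\E_N[g(m-N)]$.
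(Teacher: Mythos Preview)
Your argument is correct and, structurally, tracks the paper's proof closely: both routes (i) lower-bound $m\,\mathfrak{R}_m(\mathcal{F},\mathbf{M})$ by $\E_N[g(N)]$ via Lemma~\ref{lem:sinr}, and (ii) upper-bound $g(m)$ by $\E_N[g(N)]+\E_N[g(m-N)]$ via Lemma~\ref{lem:sepr} (the paper reaches this same inequality by writing $\mathbf{D}=(1-p)\mathbf{D}+p\mathbf{D}$ and expanding binomially rather than by naming ``subadditivity'' explicitly). The genuine difference is in step (iii), the control of $\E_N[g(m-N)]$. The paper splits the sum at $\lfloor m/2\rfloor$: for $i\le\lfloor m/2\rfloor$ it uses $i\,\mathfrak{R}_i\le(m-i)\,\mathfrak{R}_{m-i}$ and reindexes to dominate by the first sum, while for $i>\lfloor m/2\rfloor$ it uses the pointwise inequality $(1-p)^{m-i}p^{i}\le\frac{p}{1-p}(1-p)^{i}p^{m-i}$, giving the extra factor $\frac{p}{1-p}$ and hence the constant $1+1+\frac{p}{1-p}=\frac{2-p}{1-p}$. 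You instead observe that $g$ is nondecreasing (Lemma~\ref{lem:sinr}) and that $\mathrm{Bin}(m,p)$ is stochastically dominated by $\mathrm{Bin}(m,1-p)$ when $p\le\tfrac12$, which bounds the whole complement term by $\E_N[g(N)]$ in one stroke and yields the sharper constant $2\le\frac{2-p}{1-p}$. So your proof is not merely an alternative presentation: it strictly improves the constant, is shorter, and isolates exactly where the hypothesis $p\le\tfrac12$ is used. The only point to phrase carefully is that the ``subadditivity'' you need is just $g(m)\le g(a)+g(b)$ for $a+b=m$, which is precisely what averaging Lemma~\ref{lem:sepr} over $S\sim\mathbf{D}^m$ delivers; you already flag this, and it is sound.
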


\begin{proof}
  For any function space \(\mathcal{F}\) and distribution \(\mathbf{D}\), denote \(\mathfrak{R}_0(\mathcal{F}, \mathbf{D}) = 0\) and \(\hat{\mathfrak{R}}_{\emptyset}(\mathcal{F}) = 0\). By definition of Rademacher complexity and lemma \ref{lem:sepr}, we get
  \[
  \begin{aligned}
    & \mathfrak{R}_m(\mathcal{F}, \mathbf{D})  = \mathfrak{R}_m(\mathcal{F}, (1-p)\mathbf{D} + p \mathbf{D}) \\
    & = \underset{S \sim ((1-p)\mathbf{D} + p \mathbf{D})^m}{\E} \left [\hat{\mathfrak{R}}_S(\mathcal{F}) \right] \\
    & = \sum_{i=0}^m \binom{m}{i} (1-p)^i p^{m-i} \underset{S_1 \sim \mathbf{D}^{i}}{\E} \left[ \underset{S_2 \sim \mathbf{D}^{m-i}}{\E} \left[\hat{\mathfrak{R}}_{S_1 \cup S_2}(\mathcal{F}) \right] \right] \\
    & \leq \sum_{i=0}^m \binom{m}{i} (1-p)^{i} p^{m-i} \\
    & \quad \cdot \underset{S_1 \sim \mathbf{D}^{i}}{\E} \left[ \underset{S_2 \sim \mathbf{D}^{m-i}}{\E} \left [ \frac{i}{m} \hat{\mathfrak{R}}_{S_1} (\mathcal{F}) + \frac{m-i}{m} \hat{\mathfrak{R}}_{S_2} (\mathcal{F}) \right] \right] \\
    & = \sum_{i=0}^m \binom{m}{i} (1-p)^{i} p^{m-i} \\
    & \quad \cdot \left( \underset{S_1 \sim \mathbf{D}^{i}}{\E} \left[ \frac{i}{m} \hat{\mathfrak{R}}_{S_1} (\mathcal{F}) \right] + \underset{S_2 \sim \mathbf{D}^{m-i}}{\E} \left [ \frac{m-i}{m} \hat{\mathfrak{R}}_{S_2} (\mathcal{F}) \right] \right) \\
    & = \sum_{i=0}^m \binom{m}{i} (1-p)^{i} p^{m-i} \left[ \frac{i}{m} \mathfrak{R}_{i} (\mathcal{F}, \mathbf{D}) + \frac{m-i}{m} \mathfrak{R}_{i} (\mathcal{F}, \mathbf{D})\right] \\
    & = \left[\sum_{i=0}^m \binom{m}{i} (1-p)^{i} p^{m-i} \frac{i}{m} \mathfrak{R}_{i} (\mathcal{F}, \mathbf{D}) \right] \\
    & \quad + \left[ \sum_{i=0}^m \binom{m}{i} (1-p)^{m-i} p^i \frac{i}{m} \mathfrak{R}_{i} (\mathcal{F}, \mathbf{D})\right] \\
    & = \left[\sum_{i=0}^m \binom{m}{i} (1-p)^{i} p^{m-i} \frac{i}{m} \mathfrak{R}_{i} (\mathcal{F}, \mathbf{D}) \right] \\
    & \quad + \left[ \sum_{i=0}^{\lfloor m / 2 \rfloor} \binom{m}{i} (1-p)^{m-i} p^i \frac{i}{m} \mathfrak{R}_{i} (\mathcal{F}, \mathbf{D})\right] \\
    & \quad + \left[ \sum_{i=\lfloor m / 2 \rfloor + 1}^{m} \binom{m}{i} (1-p)^{m-i} p^i \frac{i}{m} \mathfrak{R}_{i} (\mathcal{F}, \mathbf{D})\right]. \\
  \end{aligned}
  \]
  The proof proceeds by handling the three parts on the right-hand side of the inequality above separately.

  For the first part, using lemma \ref{lem:sinr}, we can get
  \[
  \begin{aligned}
    & \mathfrak{R}_m(\mathcal{F}, (1-p)\mathbf{D} + p \mathbf{U}) \\
    & = \sum_{i=0}^m \binom{m}{i} (1-p)^{i} p^{m-i} \underset{S_1 \sim \mathbf{D}^{i}}{\E} \left[ \underset{S_2 \sim \mathbf{U}^{m-i}}{\E} \left[ \hat{\mathfrak{R}}_{S_1 \cup S_2}(\mathcal{F}) \right ] \right] \\
    & \geq \sum_{i=0}^m \binom{m}{i} (1-p)^{i} p^{m - i} \underset{S_1 \sim \mathbf{D}^{i}}{\E} \left[ \underset{S_2 \sim \mathbf{U}^{m-i}}{\E} \left [\frac{i}{m}\hat{\mathfrak{R}}_{S_1}(\mathcal{F}) \right] \right] \\
    & = \sum_{i=0}^m \binom{m}{i} (1-p)^{i} p^{m - i} \underset{S_1 \sim \mathbf{D}^{i}}{\E} \left[\frac{i}{m}\hat{\mathfrak{R}}_{S_1}(\mathcal{F}) \right] \\
    & = \sum_{i=0}^m \binom{m}{i} (1-p)^{i} p^{m - i} \frac{i}{m} \mathfrak{R}_{i}(\mathcal{F}, \mathbf{D}). \\
  \end{aligned}
  \]

  The second part can also proceed using lemma \ref{lem:sinr}. It is essentially upper-bounded by the first part. By the fact that \(i \leq m - i \) for \(0 \leq i \leq \lfloor m / 2 \rfloor\), we obtain
  \[
  \begin{aligned}
    & \sum_{i=0}^{\lfloor m / 2 \rfloor} \binom{m}{i} (1-p)^{m-i} p^i \frac{i}{m} \mathfrak{R}_{i} (\mathcal{F}, \mathbf{D}) \\
    & \leq \sum_{i=0}^{\lfloor m / 2 \rfloor} \binom{m}{i} (1-p)^{m-i} p^i \frac{m - i}{m} \mathfrak{R}_{m-i} (\mathcal{F}, \mathbf{D}) \\
    & = \sum_{i=m-\lfloor m / 2 \rfloor}^{m} \binom{m}{i} (1-p)^{i} p^{m-i} \frac{i}{m} \mathfrak{R}_{i} (\mathcal{F}, \mathbf{D}) \\
    & \leq \sum_{i=0}^{m} \binom{m}{i} (1-p)^{i} p^{m-i} \frac{i}{m} \mathfrak{R}_{i} (\mathcal{F}, \mathbf{D}) \\
    & \leq \mathfrak{R}_m(\mathcal{F}, (1-p)\mathbf{D} + p \mathbf{U}) \\
  \end{aligned}
  \]

  The third part takes advantage of the assumption that \( p \leq 0.5\). We know that for \(\lfloor m / 2 \rfloor + 1 \leq i \leq m\), the assumption \(p \leq 0.5\) implies
  \[
  (1-p)^{m-i} p^i \leq \frac{p}{1-p} (1-p)^{i} p^{m-i}.
  \]
  Therefore, using the first part, we achieve
  \[
  \begin{aligned}
    & \sum_{i=\lfloor m / 2 \rfloor + 1}^{m} \binom{m}{i} (1-p)^{m-i} p^i \frac{i}{m} \mathfrak{R}_{i} (\mathcal{F}, \mathbf{D}) \\
    & \leq \sum_{i=\lfloor m / 2 \rfloor + 1}^{m} \binom{m}{i} \frac{p}{1-p} (1-p)^{i} p^{m-i} \frac{i}{m} \mathfrak{R}_{i} (\mathcal{F}, \mathbf{D}) \\
    & =  \frac{p}{1-p} \sum_{i=\lfloor m / 2 \rfloor + 1}^{m} \binom{m}{i} (1-p)^{i} p^{m-i} \frac{i}{m} \mathfrak{R}_{i} (\mathcal{F}, \mathbf{D}) \\
    & \leq  \frac{p}{1-p} \sum_{i=0}^{m} \binom{m}{i} (1-p)^{i} p^{m-i} \frac{i}{m} \mathfrak{R}_{i} (\mathcal{F}, \mathbf{D}) \\
    & \leq  \frac{p}{1-p} \mathfrak{R}_m(\mathcal{F}, (1-p)\mathbf{D} + p \mathbf{U}).\\
  \end{aligned}
  \]

  By combining all the three parts above, we establish
  \[
  \begin{aligned}
    & \mathfrak{R}_m(\mathcal{F}, \mathbf{D}) \\
    & \leq \left[\sum_{i=0}^m \binom{m}{i} (1-p)^{i} p^{m-i} \frac{i}{m} \mathfrak{R}_{i} (\mathcal{F}, \mathbf{D}) \right] \\
    & \quad + \left[ \sum_{i=0}^{\lfloor m / 2 \rfloor} \binom{m}{i} (1-p)^{m-i} p^i \frac{i}{m} \mathfrak{R}_{i} (\mathcal{F}, \mathbf{D})\right] \\
    & \quad + \left[ \sum_{i=\lfloor m / 2 \rfloor + 1}^{m} \binom{m}{i} (1-p)^{m-i} p^i \frac{i}{m} \mathfrak{R}_{i} (\mathcal{F}, \mathbf{D})\right] \\
    & \leq \left( 1 + 1 + \frac{p}{1-p} \right) \mathfrak{R}_m(\mathcal{F}, (1-p)\mathbf{D} + p \mathbf{U})\\
    & = \frac{2 - p}{1-p}  \mathfrak{R}_m(\mathcal{F}, (1-p)\mathbf{D} + p \mathbf{U}).\\
  \end{aligned}
  \]
  The proof for theorem \ref{thm:rerm} is therefore concluded.
\end{proof}

The relation between \(\mathfrak{R}_n (\mathcal{F}, \mathbf{D})\) and \(\mathfrak{R}_m (\mathcal{F}, \mathbf{D})\) is achieved by the following theorem.

\begin{theorem}[Concentration inequality of subset Rademacher complexity]
  \label{thm:subr}
  Assume in solving the joint problem we obtained \(m\) idependently and identically distributed samples. Let the random number \(n\) represent the number of supervised sample obtained among these \(m\) joint samples with a proprtion probability of \(1-p\). Then, with probability at least \(1-\delta\), the following holds
  \begin{equation}
    \label{eq:subr}
    \mathfrak{R}_n (\mathcal{F}, \mathbf{D}) \leq \frac{\mathfrak{R}_m (\mathcal{F}, \mathbf{D})}{1-p-\sqrt{\frac{\log(1/\delta)}{2m}}},
  \end{equation}
  for large enough \(m\) such that
  \begin{equation}
  1-p-\sqrt{\frac{\log(1/\delta)}{2m}} > 0.
  \end{equation}
\end{theorem}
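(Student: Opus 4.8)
The plan is to combine the deterministic monotonicity relation from Lemma \ref{lem:sinr} with a concentration argument controlling the random count $n$. First I would invoke inequality \ref{eq:sint}: since each of the $m$ joint samples is either supervised or not, we always have $0 \le n \le m$, so for $n > 0$ the lemma yields the pointwise bound
\[
\mathfrak{R}_n(\mathcal{F}, \mathbf{D}) \le \frac{m}{n}\, \mathfrak{R}_m(\mathcal{F}, \mathbf{D}).
\]
Here $\mathfrak{R}_n(\mathcal{F}, \mathbf{D})$ is a deterministic function of the integer argument $n$, but because $n$ is itself random the left-hand side is a random quantity, so the target inequality \ref{eq:subr} is naturally a high-probability statement about it.

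Second I would model $n$ explicitly. Each joint sample is drawn from $\mathbf{D}$ independently with probability $1-p$, so $n$ is a sum of $m$ independent Bernoulli indicators with $\E[n] = (1-p)m$. To convert the factor $m/n$ into the claimed constant I need a high-probability lower bound on $n$, i.e. control of the lower-tail deficit $(1-p)m - n$. Hoeffding's inequality applied to these indicators gives
\[
\Pr\!\left[\, n \le (1-p)m - t \,\right] \le \exp\!\left(-\frac{2t^2}{m}\right).
\]
Setting the right-hand side equal to $\delta$ gives $t = \sqrt{m\log(1/\delta)/2}$, so with probability at least $1-\delta$ we obtain
\[
n > m\left(1-p-\sqrt{\frac{\log(1/\delta)}{2m}}\right).
\]

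Third I would substitute. On this good event the parenthesized factor is positive exactly under the stated largeness hypothesis on $m$, so in particular $n > 0$ and the pointwise bound applies. Since $\mathfrak{R}_m(\mathcal{F}, \mathbf{D}) \ge 0$ by the non-negativity of Rademacher complexity, replacing $n$ by its smaller lower bound only enlarges $m/n$ and preserves the inequality direction, giving
\[
\mathfrak{R}_n(\mathcal{F}, \mathbf{D}) \le \frac{m}{n}\,\mathfrak{R}_m(\mathcal{F}, \mathbf{D}) < \frac{\mathfrak{R}_m(\mathcal{F}, \mathbf{D})}{1-p-\sqrt{\frac{\log(1/\delta)}{2m}}},
\]
which is precisely inequality \ref{eq:subr}.

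The main obstacle I anticipate is not the algebra but the bookkeeping of what is random and in which direction each inequality must point: the deterministic Lemma \ref{lem:sinr} bound must be composed with the probabilistic lower bound on $n$ in a way that respects $\mathfrak{R}_m \ge 0$, and one must ensure the degenerate case $n = 0$ (where $m/n$ is undefined, yet $\mathfrak{R}_0 = 0$) is confined to the discarded $\delta$-probability complement rather than spoiling the bound. The positivity condition $1-p-\sqrt{\log(1/\delta)/(2m)} > 0$ is exactly what pins down both the well-definedness of the right-hand side and the ``large enough $m$'' requirement, so I would be careful to verify that this single condition simultaneously controls all three concerns.
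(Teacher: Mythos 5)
Your proposal is correct and follows essentially the same route as the paper's own proof: apply Lemma \ref{lem:sinr} to reduce the problem to bounding \(m/n\), then use Hoeffding's inequality on the binomial count \(n\) with mean \((1-p)m\) and set \(\delta = \exp(-2\epsilon^2 m)\). Your explicit handling of the \(n=0\) degenerate case and of the direction of the substitution is a small tidiness improvement over the paper's terser argument, but the underlying idea is identical.
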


\begin{proof}
  Using lemma \ref{lem:sinr}, we only need to prove an upper bound for \(m/n\). Since we know that \(n\) follows a binomial distribution with mean \((1-p)m\), using Hoeffding's inequality \cite{H63} \cite{S74}, we can obtain
  \[
  \Pr\left[n \leq (1-p-\epsilon) m\right] \leq \exp(-2 \epsilon^2 m),
  \]
  or put differently,
  \[
  \Pr\left[\frac{m}{n} \leq \frac{1}{1-p-\epsilon} \right] \geq 1 - \exp(-2 \epsilon^2 m).
  \]
  The inequality is obtained by setting \(\delta = \exp(-2 \epsilon^2 m)\). The proof assumes that \(m\) is large enough such that
  \[
  1-p-\sqrt{\frac{\log(1/\delta)}{2m}} > 0.
  \]
\end{proof}

As a result, theorem \ref{thm:rbdm} can be obtained by directly combining theorem \ref{thm:rerm} and theorem \ref{thm:subr}.

\end{document}